%% file: main.tex
\documentclass{article}

 \usepackage[preprint]{neurips_2026}

\usepackage[utf8]{inputenc} 
\usepackage[T1]{fontenc}    
\usepackage{hyperref}       
\usepackage{url}            
\usepackage{booktabs}       
\usepackage{amsfonts}       
\usepackage{nicefrac}       
\usepackage{microtype}      
\usepackage{xcolor}         
\usepackage{amsmath}
\usepackage{graphicx}
\usepackage{float}
\usepackage{multirow}
\usepackage{subcaption}
\usepackage{subfiles}
\usepackage{algorithm}
\usepackage{algpseudocode}
\usepackage{wrapfig}
\usepackage{placeins}

\usepackage{amsmath, amssymb, amsthm}

\newtheorem{proposition}{Proposition}

\newcommand{\ie}{\textit{i.e.},\ }
\newcommand{\eg}{\textit{e.g.},\ }
\newcommand{\etal}{\textit{et al.}\ }
\newcommand{\ksd}{score-KSD}

\usepackage{colortbl}

\theoremstyle{definition}

\title{Beyond Accuracy: Evaluating Posterior Fidelity of Diffusion Inverse Solvers}

\author{%
  Xiaoyu Qiu$^{1}$ \quad
  Taewon Yang$^{2}$ \quad
  Zhanhao Liu$^{2}$ \quad
  Guanyang Wang$^{3}$ \quad
  Liyue Shen$^{2}$ \\
  $^{1}$Department of Statistics, University of Michigan \\
  $^{2}$Department of EECS, University of Michigan \\
  $^{3}$Department of Statistics, Rutgers University \\
  \texttt{xiaoyuq@umich.edu} \quad
  \texttt{\{taewony, zhanhaol, liyues\}@umich.edu} \\
   \texttt{guanyang.wang@rutgers.edu}
}

\begin{document}

\maketitle

\begin{abstract}

Uncertainty evaluation is critical in scientific and engineering inverse problems. However, existing benchmarks on Diffusion Inverse Solvers (DIS) primarily focus on reconstruction accuracy but overlook uncertainty and distributional behavior. 
Since stochastic inverse solvers represent uncertainty through diffusion-based posterior samples, evaluating how well their generated samples capture the target posterior distribution becomes an important aspects of uncertainty quantification. 
To address this limitation and better understand this distributional behavior of diffusion samplers, we conduct a systematic study to investigate the posterior fidelity of a broad range of existing DIS methods in controlled simulation settings with known analytical true posterior.
Furthermore, to enable posterior-aware evaluation on real-world inverse problem where ground-truth posterior is unavailable, we propose score-based Kernel Stein Discrepancy (\textbf{\ksd}), a \textit{theoretically-grounded} and \textit{ground-truth-free} metric that measures the consistency of generated sample distribution from a DIS method with the target posterior score field, induced by the forward model and learned diffusion prior.  Through both simulation experiments and real-world inverse problem solving, we validate the effectiveness of proposed \ksd{} and demonstrate that it provides meaningful posterior fidelity diagnostics beyond reconstruction accuracy, revealing that \textit{higher reconstruction accuracy does not necessarily imply better posterior consistency}.

\end{abstract}

\subfile{S01Introduction}
\subfile{S02Background}

\subfile{S03-framework}

\subfile{S04ToySimulation}
\subfile{S05RealData}

\subfile{S06Discussion}

\section*{Acknowledgment}
Guanyang Wang acknowledges support from the National Science Foundation through grant
DMS–2210849 and an Adobe Data Science Research Award.
Liyue Shen acknowledges funding support by National Science Foundation (NSF) via grant IIS-2435746, Defense Advanced Research Projects Agency (DARPA) under contract No. HR00112520042, as well as the University of Michigan MIDAS PODS Grant Award.

\newpage

\bibliographystyle{plain}
\bibliography{References}

\appendix
\subfile{supp-ksd}
\subfile{supp-simulation}
\subfile{supp-exp}
\subfile{supp-proof}

\newpage

\end{document}

%% file: S01Introduction.tex
\section{Introduction}
Inverse problems are ubiquitous and fundamental across diverse scientific and engineering applications, including astronomy~\citep{inverse_astronomy_osti_5734250}, oceanography \citep{ocean_Wunsch_1996}, medical imaging \citep{song2022solvinginverseproblemsmedical,chung2022scoremri}, geophysics \citep{virieux_wave10.1190/1.3238367}, and audio signal processing \citep{audio_signal_lemercier2025diffusion,solve_audio_inverse_10095637}, among others. Recently, Diffusion Inverse Solvers (DIS) have emerged as a promising paradigm for solving these inverse problems, leveraging the generative power of pretrained diffusion models to regularize solutions effectively \cite{chung2022diffusionDPS,chen2025solvingDBP,cardoso2023monteMCGdiff,song2024Resamplesolvinginverseproblemslatent}.
Despite rapid algorithmic advancements, evaluation and benchmarking efforts lag behind, typically focusing on a set of natural image restoration tasks such as image denoising, deblurring and super-resolution~\citep{evalNEURIPS2021_6e289439,song2023pseudoinverseReddiff2,mardani2023reddiff1}. 
Furthermore, to evaluate real-world scientific applications with greater structural challenges in forward modeling, where priors and observations are governed by underlying physics, Zheng~\etal introduced InverseBench~\cite{zheng2025inversebench} , a comprehensive evaluation of existing diffusion inverse solver methods focused on scientific tasks.

\begin{figure}
    \centering
    \includegraphics[width=1.0\linewidth]{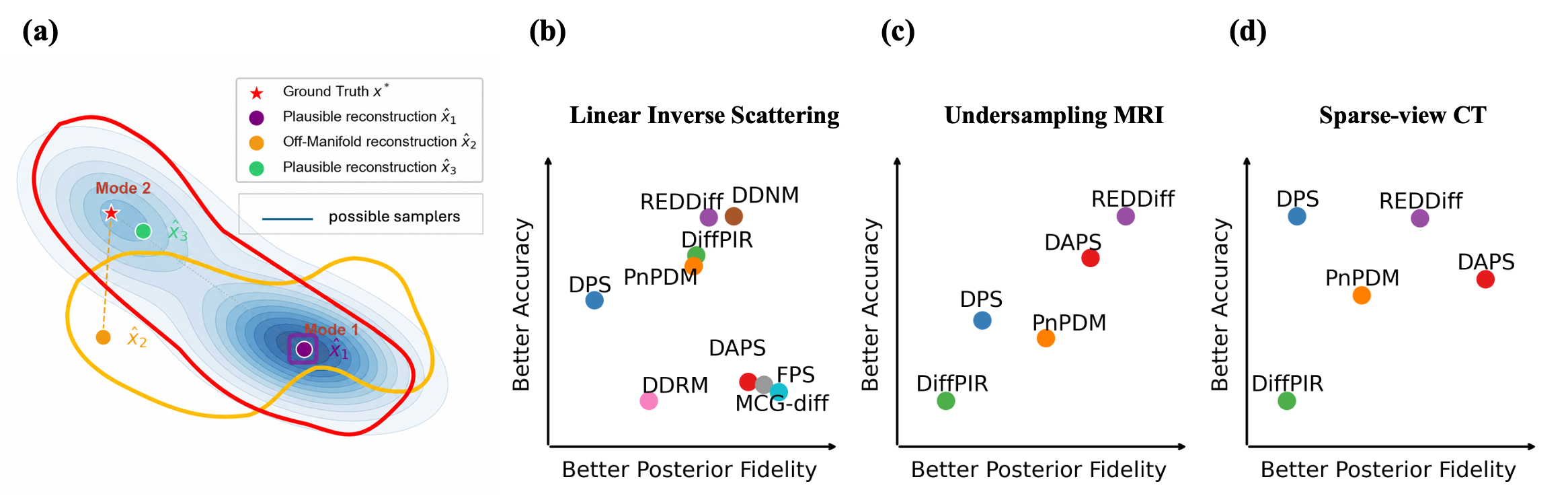}
    \caption{\textbf{(a): Illustration of the \textit{Accuracy Trap} phenomenon} and distinct uncertainty behaviors by different DIS samplers. \textbf{(b)$\sim$(d): Demonstration of posterior fidelity and accuracy performance across various DIS algorithms} in three inverse problems: \textbf{(b)} linear inverse scattering, \textbf{(c)} under-sampling MRI, and \textbf{(d)} sparse-view CT reconstruction.}
    \label{fig:abs}
    \vspace{-20pt}
\end{figure}

However, another gap remains in evaluation objective. 
Natural image restoration tasks often reward pixel-wise accuracy (\eg Peak Signal-to-Noise Ratio (PSNR)) from a random reconstruction~\cite{evalNEURIPS2021_6e289439}.
In contrast, inverse problems are inherently ill-posed with measurement noise which can leads to multiple physically plausible solutions (Fig.~\ref{fig:abs}), naturally leading to statistical uncertainty quantification\cite{kaipio2005statistical,stuart2010inverse}. Moreover, such uncertainty analysis is especially important and required in engineering and scientific applications, \ie calibrated uncertainty that preserves all physically valid solutions and enables principled risk quantification~\cite{UQMRIedupuganti2020uncertaintyquantificationdeepmri,kumar2024multiinverseDesign}.
This creates a crucial gap across evaluation of existing DIS works: 
not only are we ignoring the inherent stochastic nature of DIS, but we are also overlooking the critical role of uncertainty behavior of the sampled distribution requested in scientific applications. This mismatch is evident as shown in Fig.~\ref{fig:abs}, where several DIS methods produce similar accuracy performance in reconstructions for the same task, yet induce markedly different distributional behaviors, reflecting distinct posterior fidelity.

We call this phenomenon the \textit{Accuracy Trap}. As illustrated in Fig.~\ref{fig:abs}, relying solely on point accuracy metrics (\eg PSNR) can fundamentally mischaracterize posterior samplers. For instance, an off-posterior reconstruction $\hat{x}_2$ may achieve a higher PSNR than a posterior-plausible reconstruction $\hat{x}_3$ simply because $\hat{x}_2$ happens to be closer to the ground-truth $x^*$. Moreover, different DISs can exhibit qualitatively different uncertainty behaviors. 
Some solvers may produce well-dispersed samples that largely reflect the posterior uncertainty, some may generate a mixture of posterior-plausible and off-posterior samples, and others may collapse to nearly deterministic outputs. 
Consequently, robust uncertainty quantification (UQ) is not an optional add-on, but a prerequisite for deploying DIS methods in risk-sensitive scientific applications.

Since stochastic inverse solvers represent uncertainty through posterior samples, evaluating how well their generated samples capture the target posterior distribution becomes an important aspects of uncertainty quantification. 
Aligned with this goal, some recent DIS methods make efforts to introduce provable samplers~\cite{wu2024PnPDM,cardoso2023monteMCGdiff,dou2024fpsdiffusion,coeurdoux2023plugandplaysplitgibbssampler}, and validate the posterior estimation on controlled simulations where the analytical posterior is known using metrics such as sliced Wasserstein distance \cite{cardoso2023monteMCGdiff}.
However, evaluating posterior fidelity in realistic inverse problems remains largely unsolved. Existing distributional metrics, such as FID and LPIPS, require samples from both compared distributions and therefore inapplicable to real-world inverse problem without ground truth posterior samples.

Encouragingly, UQ has received growing attention in machine learning, through  aleatoric uncertainty(AU) and epistemic uncertainty(EU) decomposition~\cite{ML_uncertainty_H_llermeier_2021}, single-model uncertainty estimation~\cite{hofman2024quantifyingAandEUproperscore,chan2024estimatingEUAU_signlemodel,BayesianMRILuo_2023}, uncertainty-based distribution shift detection for DIS~\cite{kim2025distributionshiftuncertaintyestimationinverse}, and controlled statistical benchmarking studies~\cite{zach2025statisticalbenchmarkdiffusionposterior}. Yet, to our knowledge, no existing work address the following central question: \textit{Can stochastic DIS recover the posterior $p(x\mid y)$, and how should we evaluate such posterior fidelity without true posterior sampler and density, as in real-world inverse problems?}

\vspace{-8pt}
\paragraph{Contributions.}
To address this challenge, we provide a systematic study and propose a new metric to evaluate the posterior fidelity for DIS methods:

\begin{itemize}
    \item  We conduct a systematic study of posterior fidelity for a broad range of DIS in controlled simulation settings with known analytical true posterior. Beyond reconstruction accuracy, we analyze how well generated samples capture the target posterior distribution and characterize the distributional behavior of different DIS methods.
    \item We propose score-based Kernel Stein Discrepancy (\textbf{\ksd}), a \textit{theoretically grounded} and \textit{ground-truth-free} metric for evaluating posterior consistency of DIS methods in inverse problem solving. The proposed metric measures agreement between generated samples and posterior score field induced by the forward model and learned diffusion prior, enabling posterior-aware evaluation even when exact posterior samplers or densities are unavailable.
    \item Through experiments on both toy models and real-world inverse problems, we demonstrate that \ksd{} provides meaningful diagnostics of posterior fidelity beyond reconstruction accuracy, revealing that \textit{strong reconstruction performance does not necessarily imply better posterior consistency} (Fig.~\ref{fig:abs}), highlighting the importance of distribution-aware evaluation for stochastic inverse solvers.
\end{itemize}

%% file: S02Background.tex
\section{Preliminarily and Background}

\subsection{Diffusion Models}
\label{sec:diffusion_pnpdp}
Diffusion models (DM) have demonstrated  extraordinary ability to generate high quality images~\cite{song2021scorebasedgenerativemodelingstochastic,DDPMho2020denoisingdiffusionprobabilisticmodels,ddimsong2022denoisingdiffusionimplicitmodels}.
A diffusion model defines a forward noising process that transforms clean data $x_0\sim p_{\mathrm{data}}$
into noisy variables $x_t$ for $t\in[0,T]$, and learns a network that enables reversing this process.
In practice, the training of diffusion model can be viewed as either (i) estimating \emph{a score function} 
$s_\theta(x_t,t)\approx \nabla_{x_t}\log p_t(x_t)$ as formulated in the score-based DM~\cite{song2021scorebasedgenerativemodelingstochastic}, 
or (ii) learning \emph{a denoiser} that predicts a clean image
$\hat x_0=\mathrm{Denoise}_\theta(x_t,t)$ from the noisy image $x_t$ as formulated in Denoising Diffusion Probabilistic Model~\cite{DDPMho2020denoisingdiffusionprobabilisticmodels}, where $t$ denotes the diffusion sampling steps.
Throughout, we view the diffusion model as an \emph{implicit distributional prior} that can be queried via the score function or denoising operations, when the prior density $\log p_\theta(x_0)$ is not available in closed form.

\subsection{Diffusion Priors for Inverse Problem Solving}
The inverse problem aims at reconstructing an unknown signal \( x \in \mathbb{R}^n \) 
based on the measurements \( y \in \mathbb{R}^m \). Formally, \( y \) derives from a forward process determined by $y = \mathcal{A}x + \epsilon,$ where $ \mathcal{A} $ can be either a linear operator, such as the Radon transform in sparse-view CT reconstruction and Fourier transform in accelerated MRI, or a nonlinear operator, such as the JPEG restoration encoder.
$ \mathcal{A} $ can also be either given or unknown. 
In this work, we focus on the situation where $A$ is given. The term \( \epsilon \) denotes random measurement noise.

Diffusion inverse solver (DIS) methods combine a pretrained diffusion model prior $p_\theta(x)$
with a known forward model to perform inference for the posterior 
$p_\theta(x\mid y)\ \propto\ p(y\mid x)\,p_\theta(x),$
where the prior term $p_\theta(x)$ comes from the diffusion model prior and the likelihood term $p(y\mid x)$ is determined by forward operator $\mathcal{A}$ and the noise model.
The likelihood term enforces measurement consistency by favoring reconstructions that yield high $p(y\mid x)$. In practice, DIS algorithms impose measurement consistency in the diffusion sampling trajectory via different mechanisms, including gradients~\cite{chung2022diffusionDPS,zhang2024improvingdiffusioninverseproblemDAPS,wu2024PnPDM}, projection~\cite{chung2022scoremri,jalal2021robustcompressedsensingmri,kawar2022denoisingDDRM,wang2022zeroDDNM,kawar2022denoisingDDRM}, sampling~\cite{cardoso2023monteMCGdiff,dou2024fpsdiffusion}, or other optimizations~\cite{mardani2023reddiff1,song2023pseudoinverseReddiff2}.

Prior work has proposed different taxonomies for diffusion-based inverse solvers depending on the different criterion,
such as algorithmic structure, optimization technique, or the type of inverse problems~\cite{daras2024surveydiffusionmodelsinverse,zheng2025inversebench}.
For example, InverseBench groups existing DIS methds mainly based on algorithmic structure, including linear guidance, general guidance, variable-splitting, variational Bayes, and sequential Monte Carlo
\cite{zheng2025inversebench}.
InverseBench further provides a comprehensive benchmark that evaluates reconstruction performance across diverse tasks using standard
accuracy metrics including PSNR and Structural Similarity Index Measure (SSIM)~\cite{zheng2025inversebench}. 
While this accuracy assessment provides useful insights, 
it does not evaluate the posterior fidelity to understand the uncertainty and distributional behavior of different DIS methods.


\subsection{Posterior Uncertainty in Inverse Problems}
\label{sec:uncertainty_sources}
Solutions to the ill-posed inverse problems are inherently uncertain due to incomplete measurements, measurement noise, and imperfect prior information~\cite{kaipio2005statistical,stuart2010inverse}. In machine learning literature, these uncertainties are commonly categorized into epistemic uncertainty (EU) arising from limited information or model uncertainty, and aleatoric uncertainty (AU) arising from intrinsic stochasticity in the measurement generation process~\cite{UncertaintyDeepLearningNIPS2017_2650d608,nagel2016unified}.

In diffusion-based inverse problems solving, AU is primarily induced by measurement noise, while EU is associated with information loss from the ill-posed forward operator, potential model specification or prior mismatch. Thus, intrinsic posterior distribution induced by the inverse problem should exhibit substantial uncertainty, particularly  under ill-posed measurement settings. Since stochastic DIS aims to characterize posterior uncertainty through generated samples, posterior fidelity naturally becomes a key criterion for evaluating whether the sampled distributions reflect the underlying posterior behavior induced by the inverse problem.

\subsection{Limitation on Current Evaluation Metrics}
\label{sec:limits_accuracy}
Existing work mainly benchmarks the reconstruction quality by accuracy (\eg PSNR/SSIM). 
While accuracy metrics remain necessary, they are insufficient for evaluating DIS methods. 
There are two fundamental reasons:
(i) in ill-posed inverse problems the \emph{target} is a posterior distribution $p(x\mid y)$ with many plausible reconstructions of the same measurement,
and (ii) most DIS algorithms are inherently \emph{stochastic}, producing a distribution of reconstructions rather than a single deterministic output.
Together, the object of interest is a distribution over reconstructions, motivating uncertainty-aware evaluation.

Although posterior fidelity of DIS has recently received increasing attention~\cite{cardoso2023monteMCGdiff,zach2025statisticalbenchmarkdiffusionposterior}, existing metric such as Wasserstein distance is primarily limited to controlled simulation settings where ground-truth posterior is accessible. Common distributional metrics in real images such as FID~\cite{FIDheusel2017gans} and LPIPS~\cite{LPIPSzhang2018unreasonable} require samples from both compared distributions, making them inapplicable to real-world inverse problems where neither true posterior samplers nor normalized posterior densities are accessible. This limitation highlights an urgent need for distributional posterior fidelity evaluation methods that do not rely on access to ground-truth posterior distribution or samples.

%% file: S03-framework.tex
\section{Posterior Fidelity Evaluation via Score-KSD}

\subsection{Posterior Score Approximation}

To evaluate posterior fidelity, we seek a metric that measures how well the sample distribution induced by a DIS matches the Bayesian posterior. In synthetic settings, this can be achieved by comparing to ground-truth posterior samples. However, such samples are unavailable in realistic inverse problems, making direct distributional comparison infeasible.

A key observation is that, although the posterior density $p(x \mid y)$ is intractable, its score can be computed up to approximation.
Using Bayes' rule $  p(x\mid y) \propto  p(y\mid x)p(x),$ the posterior can be decomposed into  $\nabla_x \log p(x\mid y) = \nabla_x \log p(y\mid x) + \nabla_x \log p(x)$ after taking $\log$ and gradient.

Assuming Gaussian measurement noise \(\varepsilon \sim \mathcal N(0,\sigma_y^2 I)\), the likelihood score is analytically available
$\nabla_x \log p(y\mid x) = \frac{1}{\sigma_y^2}
J_{\mathcal A}(x)^\top \bigl(y-\mathcal A(x)\bigr),$ where \(J_{\mathcal A}(x)\) is the Jacobian of \(\mathcal A\) and it reduces to \(\sigma_y^{-2}\mathcal A^\top(y-\mathcal Ax)\) in the linear inverse problem. 
Moreover, although the prior score on clean image $ \nabla_x \log p(x)$ is unavailable, it can be approximated using the pretrained diffusion model through the pretrained score function $s_\theta(x_t,t)$ at small diffusion sampling timestep $t.$ Specifically, for a collection of small diffusion times $\{t_k\}_{k=1}^K$, we perturb $x$ as $x_{t_k}=\alpha_{t_k}x+\sigma_{t_k}z_k,\quad z_k\sim\mathcal N(0,I),$ and average them to approximate the diffusion score for clean images:
$\widehat s_{\mathrm{prior}}(x)
= \frac{1}{K}\sum_{k=1}^K \alpha_{t_k} s_\theta(x_{t_k},t_k).$ This yields an approximated posterior score
$\widehat s_{\mathrm{p}}(x;y)
= \nabla_x\log p(y\mid x) + \widehat s_{\mathrm{prior}}(x).$ The practical approximation details are provided in Appendix~\ref{supp:ksd}.

\subsection{Kernel Stein Discrepancy}
\begin{wrapfigure}{r}{0.6\textwidth}
\vspace{-40pt}
\begin{minipage}{0.6\textwidth}
\begin{algorithm}[H]
\caption{Score-Based KSD for DIS}
\label{alg:ksd_analytical}
\begin{algorithmic}[1]
\Require $\{x_i\}_{i=1}^N$, $y$, $\mathcal A$, $s_\theta$

\For{$i=1,\ldots,N$}
\State $s_{\mathrm{lik}}(x_i)=\frac{1}{\sigma_y^2} \mathcal A^\top (y - \mathcal A x_i)$
\State $z_k\sim\mathcal N(0,I)$
\State $\hat s_{\mathrm{prior}}(x_i)=\frac{1}{K}\sum_{k=1}^K \alpha_{t_k}\, s_\theta(\alpha_{t_k}x_i+\sigma_{t_k}z_k,t_k)$
\State $\hat s_p(x_i)=s_{\mathrm{lik}}(x_i)+\hat s_{\mathrm{prior}}(x_i)$
\EndFor
\State Compute $u_p(x_i,x_j)$ using Equation ~\ref{eq:kernel}.
\State \Return $ \text{score-KSD} =\frac{1}{N}\sqrt{\sum_{i,j=1}^N u_p(x_i,x_j)/d}$

\end{algorithmic}
\end{algorithm}
\end{minipage}
\vspace{-1em}
\end{wrapfigure}

Given this approximated posterior score induced by the pretrained diffusion score function $s_\theta$, together with generated $N$ posterior samples from a DIS method $\{x_i\}_{i=1}^N$, we can evaluate its posterior fidelity without access to posterior samples by using Kernel Stein Discrepancy (KSD)~\cite{KSDliu2016kernelizedsteindiscrepancygoodnessoffit}. 
KSD provides a score-based measure of whether generated samples are consistent with the Stein identity associated with the target posterior distribution.

Let \(q(x\mid y)\) denote the implicit sample distribution induced by a DIS method, and let \(\widehat s_{\mathrm{p}}(x;y)\) denote the approximated posterior score. For a test function \(f:\mathbb R^d\to\mathbb R^d\), the Langevin Stein operator is
$\mathcal T_p f(x)=\widehat s_{\mathrm{p}}(x;y)^\top f(x)+\nabla_x\cdot f(x).$
Under standard regularity conditions, if \(X\sim p(x\mid y)\), then
$\mathbb E[\mathcal T_p f(X)] = 0$ (see Proposition~\ref{prop:stein identity}).
KSD measures the maximum violation of this identity over a reproducing kernel Hilbert space (RKHS):
$\mathrm{KSD}(q,p)=\sup_{\|f\|_{\mathcal H^d}\le 1}\mathbb E_{X\sim q}\left[\mathcal T_p f(X)\right].$
For empirical samples
$\hat q_N=\frac{1}{N}\sum_{i=1}^N \delta_{x_i}, \quad x_i\sim q(x\mid y),$
the squared KSD admits the closed-form empirical estimator 
$\mathrm{KSD}^2(\hat q_N,p)=\frac{1}{N^2}\sum_{i,j=1}^N u_p(x_i,x_j),$
where \begin{equation}
\small \label{eq:kernel}
    u_p(x_i,x_j)= s_p(x_i)^\top k(x_i,x_j)s_p(x_j) +
    s_p(x_i)^\top \nabla_{x_j} k(x_i,x_j) +s_p(x_j)^\top \nabla_{x_i} k(x_i,x_j)+ \mathrm{tr}\!\left(
        \nabla_{x_i}\nabla_{x_j} k(x_i,x_j)
    \right)
\end{equation} is the standard Stein kernel term depending on the posterior score and kernel function denoted as $k(x_i,x_j)$ (see Proposition~\ref{prop:closed form}). 
We use the inverse multiquadric (IMQ)  kernel in all experiments, with details provided in Appendix~\ref{supp:ksd}. Since the magnitude of empirical KSD depends on the data dimension of $x$, we applied a  normalization in our proposed metric:
$\mathrm{\ksd{}} = \frac{1}{N}\sqrt{\sum_{i,j=1}^N u_p(x_i,x_j)/d},$ where $d$ is the dimension of $x.$
 Throughout the paper,  \ksd{} refers to this empirical normalized quantity unless otherwise specified.
KSD is used as a posterior-consistency diagnostic for generated samples. Under suitable kernel conditions\cite{KSDliu2016kernelizedsteindiscrepancygoodnessoffit,gorham2019measuringsamplequalitysteins}, KSD is nonnegative and equals zero if and only if the sample distribution matches the target posterior distribution (see Proposition~\ref{prop:valid measure}). Consequently, within a fixed inverse problem setup, a smaller \ksd{} generally indicates stronger consistency between the generated sample distribution and the target posterior score field.
Note that the absolute magnitude of \ksd{} depends on posterior sharpness, dimensionality, etc.\ Therefore, \ksd{} should be interpreted as a within-task posterior-consistency diagnostic to evaluate posterior fidelity, rather than an absolute cross-task metric.

\begin{proposition}[Stein identity for the posterior~\cite{gorham2019measuringsamplequalitysteins,KSDliu2016kernelizedsteindiscrepancygoodnessoffit}]
\label{prop:stein identity}
Let \(p(x\mid y)\) be a differentiable posterior density on \(\mathbb R^d\), and define its score as $s_p(x) := \nabla_x \log p(x\mid y).$
For a vector-valued test function \(f:\mathbb R^d\to\mathbb R^d\), define the Langevin Stein operator
  $\mathcal T_p f(x) =s_p(x)^\top f(x) +\nabla_x\cdot f(x).$
Assume \(f\) is sufficiently smooth and satisfies the boundary condition
$ \lim_{\|x\|\to\infty} p(x\mid y) f(x)=0,$
so that integration by parts is valid. Then, if \(X\sim p(x\mid y)\),
\[
    \mathbb E_{X\sim p(x\mid y)}
    \left[
        \mathcal T_p f(X)
    \right]=0.
\]
\end{proposition}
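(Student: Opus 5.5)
The argument rests on the observation that the integrand of $\mathbb E[\mathcal T_p f(X)]$ is an exact divergence. Since $p(x\mid y)\,s_p(x)=\nabla_x p(x\mid y)$ wherever $p>0$ (and both sides vanish where $p=0$ if $p$ is differentiable and nonnegative, since such points are minima), the product rule gives
\[
p(x\mid y)\,\mathcal T_p f(x)=\nabla_x p(x\mid y)^\top f(x)+p(x\mid y)\,\nabla_x\cdot f(x)=\nabla_x\cdot\bigl(p(x\mid y)f(x)\bigr).
\]
So the plan is to write $\mathbb E_{X\sim p(x\mid y)}[\mathcal T_p f(X)]=\int_{\mathbb R^d}\nabla_x\cdot\bigl(p f\bigr)\,dx$ and show that this integral of a divergence vanishes.

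The key steps are as follows.
\begin{enumerate}
\item Verify the pointwise identity above. This uses only differentiability of $p$ and $f$.
\item Restrict the integral to the ball $B_R=\{\|x\|\le R\}$ and apply the divergence theorem:
\[
\int_{B_R}\nabla\cdot(pf)\,dx=\int_{\partial B_R}p\,f^\top n\,dS .
\]
\item Let $R\to\infty$. The left side converges to the full expectation, provided $p\,\mathcal T_p f$ is integrable; this is part of ``sufficiently smooth,'' which we read as including $\mathbb E|\mathcal T_pf(X)|<\infty$, so dominated convergence applies.
\end{enumerate}

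The main obstacle is the boundary term in step 3. The stated hypothesis, $p f\to 0$ pointwise as $\|x\|\to\infty$, bounds the surface integral only by $\sup_{\partial B_R}|pf|\cdot|\partial B_R|$. That bound need not vanish, because $|\partial B_R|\sim R^{d-1}$. There are three ways to handle this:
\begin{itemize}
\item Interpret the boundary condition in the sense the paper intends, i.e.\ as sufficient for integration by parts (the statement says ``so that integration by parts is valid''), so that the surface flux tends to zero.
\item Avoid surface integrals altogether. The first thing to try is to note that $g:=pf$ has integrable divergence and that $g$ itself is integrable, which holds, for example, when $f$ is bounded. Then average the flux over shells $R\in[R_0,2R_0]$: by the coarea formula this average is controlled by $R_0^{-1}\int_{R_0\le\|x\|\le 2R_0}|g|\,dx\to 0$, so along some sequence $R_k\to\infty$ the flux tends to zero.
\item Equivalently, multiply by a smooth cutoff $\chi_R(x)=\chi(x/R)$ and integrate by parts with no boundary. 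This gives $\int\chi_R\nabla\cdot g=-\int g^\top\nabla\chi_R$, and the right side is bounded by $\|\nabla\chi\|_\infty R^{-1}\int|g|\to 0$.
\end{itemize}

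The cutoff version is cleanest, and it is the one I would present. It uses dominated convergence on the left side, and on the right side the integrability of $pf$, which is the standard regularity assumption in \cite{gorham2019measuringsamplequalitysteins,KSDliu2016kernelizedsteindiscrepancygoodnessoffit}. This gives $\mathbb E[\mathcal T_pf(X)]=0$.
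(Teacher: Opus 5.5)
Your proposal is correct and takes the same route as the paper: both write $p(x\mid y)\,\mathcal T_p f(x)=\nabla_x\cdot\bigl(p(x\mid y)f(x)\bigr)$ and then show that the integral of this divergence vanishes. The paper handles the last step only by citing ``the divergence theorem and the boundary condition,'' whereas your cutoff argument (assuming $pf\in L^1$ and $\mathbb E|\mathcal T_pf(X)|<\infty$) actually proves it; your caution is warranted, because in $\mathbb R^2$ with $p$ standard Gaussian and $pf=x\,(1-e^{-\|x\|^2})/\|x\|^2$ one has $pf\to 0$, yet $\mathbb E[\mathcal T_pf(X)]=\int_{\mathbb R^2}2e^{-\|x\|^2}\,dx=2\pi\neq 0$, so pointwise decay of $pf$ alone does not suffice.
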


\begin{proposition}[KSD is a valid discrepancy measure]
\label{prop:valid measure}
Kernel Stein Discrepancy satisfies the following properties:
\begin{enumerate}
    \item \textbf{Non-negativity:}
        $\mathrm{KSD}(q,p) \ge 0.$
    \item \textbf{Identity of indiscernibles:} Under suitable smoothness and integrability conditions on \(p\)~\cite{KSDliu2016kernelizedsteindiscrepancygoodnessoffit,gong2021slicedkernelizedsteindiscrepancy}, and for a characteristic kernel \(k\),
        $\mathrm{KSD}(q,p)=0
        \quad\Longleftrightarrow\quad
        q(x\mid y)=p(x\mid y).$
\end{enumerate}
\end{proposition}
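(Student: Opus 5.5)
Non-negativity follows directly from the definition $\mathrm{KSD}(q,p)=\sup_{\|f\|_{\mathcal H^d}\le 1}\mathbb E_{X\sim q}[\mathcal T_p f(X)]$. The unit ball of $\mathcal H^d$ is symmetric: if $f$ is admissible then so is $-f$. Since the Stein operator is linear in $f$, $\mathbb E_q[\mathcal T_p(-f)] = -\mathbb E_q[\mathcal T_p f]$. Hence the supremum is at least $\max\{a,-a\}\ge 0$ for any admissible $f$. Taking $f=0$ already gives the bound $\ge 0$.

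For identity of indiscernibles, I will first rewrite the supremum in RKHS form. By the reproducing property, for $f=(f_1,\dots,f_d)\in\mathcal H^d$ we have $f_\ell(x)=\langle f_\ell,k(x,\cdot)\rangle$ and $\partial_{x_\ell} f_\ell(x)=\langle f_\ell,\partial_{x_\ell}k(x,\cdot)\rangle$. This requires $k$ to be $C^{(1,1)}$ so that derivatives reproduce. Therefore $\mathbb E_q[\mathcal T_p f(X)]=\langle f,\beta_q\rangle_{\mathcal H^d}$, where $\beta_q=\mathbb E_{X\sim q}[s_p(X)k(X,\cdot)+\nabla_X k(X,\cdot)]$. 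This is a Bochner integral, which is well defined under the integrability assumption $\mathbb E_q[u_p(X,X)]<\infty$. Consequently $\mathrm{KSD}(q,p)=\|\beta_q\|_{\mathcal H^d}$, and expanding the square gives the double-integral form $\mathbb E_{X,X'\sim q}[u_p(X,X')]$. That form is the population analogue of Eq.~\eqref{eq:kernel}.

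($\Leftarrow$) If $q=p$, then each component of $\beta_p$, evaluated against any $f$, equals $\mathbb E_p[\mathcal T_p f]$. This is zero by Proposition~\ref{prop:stein identity}, provided the boundary condition $p(x\mid y)f(x)\to 0$ holds for all $f$ in the RKHS. For the IMQ or any bounded kernel this holds because RKHS functions are bounded and $p$ decays. Thus $\beta_p=0$ and $\mathrm{KSD}=0$.

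($\Rightarrow$) This is the main obstacle. I would follow Liu et al. Suppose $\mathrm{KSD}(q,p)=0$, so $\mathbb E_{q}[u_p(X,X')]=0$. Write $\delta(x)=s_q(x)-s_p(x)$ and assume $q$ is differentiable with the same boundary conditions. The Stein identity for $q$ gives $\mathbb E_q[s_q^\top f+\nabla\cdot f]=0$. Subtracting this from $\mathbb E_q[\mathcal T_p f]$ yields $\mathbb E_q[\mathcal T_p f]=-\mathbb E_q[\delta(X)^\top f(X)]$. Repeating the RKHS argument then gives
\[
\mathrm{KSD}^2(q,p)=\mathbb E_{X,X'\sim q}\bigl[\delta(X)^\top k(X,X')\delta(X')\bigr].
\]
Integral strict positive definiteness of $k$, which follows from the characteristic property for kernels like Gaussian or IMQ, then forces $q(x)\delta(x)=0$ almost everywhere. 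Hence $\nabla\log q=\nabla\log p$ on the support of $q$. With full support, $\log(q/p)$ is constant on the connected space $\mathbb R^d$, and normalization gives $q=p$. The delicate points are the justification of integration by parts for $q$ and the positivity requirement on $q$. I would state these explicitly as the ``suitable smoothness and integrability conditions'' and defer to \cite{KSDliu2016kernelizedsteindiscrepancygoodnessoffit,gong2021slicedkernelizedsteindiscrepancy} for the version without them.
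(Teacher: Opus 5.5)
Your skeleton matches the paper's: non-negativity from $f\equiv 0$ (your symmetry remark is extra but harmless), and ($\Leftarrow$) from the Stein identity of Proposition~\ref{prop:stein identity}. The real difference is in ($\Rightarrow$). The paper gives no argument there; it only asserts that, under regularity on $p$ and a characteristic kernel, the only distribution annihilating $\mathcal T_p$ on $\mathcal H^d$ is $p$. You instead carry out the Liu--Lee--Jordan argument: the score-difference representation $\mathrm{KSD}^2=\mathbb E_{X,X'\sim q}[\delta(X)^\top k(X,X')\delta(X')]$, then integral strict positive definiteness, then integrating $\nabla\log(q/p)=0$. This is a correct, explicit proof. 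Your claim that ISPD follows from the characteristic property also holds for translation-invariant kernels such as IMQ and Gaussian, since their spectral measures have full support.

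The price is that your argument needs hypotheses on $q$: a differentiable density, its own Stein identity and boundary behaviour, and $\mathbb E_q\|\delta\|^2<\infty$. The statement places conditions only on $p$. This matters in this paper, where $q$ is an arbitrary sampler law, possibly nearly collapsed and with no smooth density.

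Deferring to Liu et al.\ or Gong et al.\ does not remove this, because both also assume a smooth $q$. A separation result with conditions only on $p$ needs a different tool. One example is the IMQ result of Gorham and Mackey~\cite{gorham2017measuring}: for distantly dissipative $p$ with Lipschitz score, $\mathrm{KSD}(q_n,p)\to 0$ implies $q_n\Rightarrow p$. Applied to a constant sequence, $\mathrm{KSD}(q,p)=0$ then forces $q=p$ with no smoothness of $q$. That is closer to what the paper's one-line appeal implicitly relies on.

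Finally, within your own framework the full-support assumption on $q$ is unnecessary. Suppose $p>0$ and $q$ are continuous with continuous scores. Then $q=c_U\,p$ on each connected component $U$ of $\{q>0\}$. Continuity of $q$ at a boundary point of $U$ would force $c_U\,p=0$ there, which is impossible, so $\{q>0\}=\mathbb R^d$.
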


\begin{proposition}[Closed-form KSD with empirical distribution]
\label{prop:closed form}
Let \(p(x\mid y)\) be the target posterior with score
   $ s_p(x)=\nabla_x\log p(x\mid y),$
and let \(q(x\mid y)\) be the sample posterior distribution induced by a sampler. Given samples
    $x_i\sim q(x\mid y),\quad i=1,\dots,N,$
define the empirical distribution
    $\hat q_N=\frac{1}{N}\sum_{i=1}^N \delta_{x_i}.$
Let \(\mathcal H\) be an RKHS with scalar kernel \(k\), and let
\(\mathcal H^d\) be the corresponding vector-valued RKHS. The KSD between
\(\hat q_N\) and \(p\) is
    $\mathrm{KSD}(\hat q_N,p)
    =
    \sup_{\|f\|_{\mathcal H^d}\le 1}
    \frac{1}{N}\sum_{i=1}^N
    \mathcal T_p f(x_i),$
where
    $\mathcal T_p f(x)
    =
    s_p(x)^\top f(x)
    +
    \nabla_x\cdot f(x).$
Then the squared empirical KSD admits the closed-form expression $$\mathrm{KSD}^2(\hat q_N,p)= \frac{1}{N^2} \sum_{i=1}^N \sum_{j=1}^N u_p(x_i,x_j),$$ where $u_p(x_i,x_j)
    =
    s_p(x_i)^\top k(x_i,x_j)s_p(x_j) +
    s_p(x_i)^\top \nabla_{x_j} k(x_i,x_j) +
    s_p(x_j)^\top \nabla_{x_i} k(x_i,x_j)
    +
    \mathrm{tr}\!\left(
        \nabla_{x_i}\nabla_{x_j} k(x_i,x_j)
    \right)$

\end{proposition}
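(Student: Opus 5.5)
We use the reproducing property to write the supremum as the RKHS norm of a single element of $\mathcal H^d$, whose squared norm then expands into the double sum. Write $f=(f_1,\dots,f_d)$ with $f_\ell\in\mathcal H$ and $\|f\|_{\mathcal H^d}^2=\sum_\ell\|f_\ell\|_{\mathcal H}^2$.

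\textbf{Step 1: linear terms.} By the reproducing property, $f_\ell(x)=\langle f_\ell,k(x,\cdot)\rangle_{\mathcal H}$. Assume $k$ is $C^{(1,1)}$, e.g.\ the IMQ kernel. Then the derivative reproducing property gives $\partial_{x_\ell} f_\ell(x)=\langle f_\ell,\partial_{x_\ell}k(x,\cdot)\rangle_{\mathcal H}$, and $\partial_{x_\ell}k(x,\cdot)\in\mathcal H$. Hence
\[
\mathcal T_p f(x)=\sum_{\ell=1}^d\big\langle f_\ell,\ \xi_\ell(x,\cdot)\big\rangle_{\mathcal H},\qquad \xi_\ell(x,\cdot):=s_{p,\ell}(x)\,k(x,\cdot)+\partial_{x_\ell}k(x,\cdot).
\]
That is, $\mathcal T_p f(x)=\langle f,\xi(x,\cdot)\rangle_{\mathcal H^d}$. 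Averaging over the samples and using linearity gives
\[
\frac1N\sum_i\mathcal T_pf(x_i)=\Big\langle f,\ \bar\xi\Big\rangle_{\mathcal H^d},\qquad \bar\xi:=\frac1N\sum_i\xi(x_i,\cdot).
\]

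\textbf{Step 2: supremum.} By Cauchy--Schwarz, the supremum over the unit ball equals $\|\bar\xi\|_{\mathcal H^d}$, attained at $f=\bar\xi/\|\bar\xi\|$; the trivial case $\bar\xi=0$ gives value $0$. Therefore
\[
\mathrm{KSD}^2(\hat q_N,p)=\|\bar\xi\|^2_{\mathcal H^d}=\frac1{N^2}\sum_{i,j}\langle\xi(x_i,\cdot),\xi(x_j,\cdot)\rangle_{\mathcal H^d}.
\]

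\textbf{Step 3: expanding the inner products.} For each $\ell$ the inner product $\langle\xi_\ell(x_i,\cdot),\xi_\ell(x_j,\cdot)\rangle$ splits into four terms, computed with the reproducing and derivative-reproducing identities:
\begin{itemize}
\item $\langle k(x_i,\cdot),k(x_j,\cdot)\rangle=k(x_i,x_j)$;
\item $\langle k(x_i,\cdot),\partial_{x_\ell}k(x_j,\cdot)\rangle=\partial_{x_{j,\ell}}k(x_i,x_j)$;
\item $\langle\partial_{x_\ell}k(x_i,\cdot),k(x_j,\cdot)\rangle=\partial_{x_{i,\ell}}k(x_i,x_j)$;
\item $\langle\partial_{x_\ell}k(x_i,\cdot),\partial_{x_\ell}k(x_j,\cdot)\rangle=\partial_{x_{i,\ell}}\partial_{x_{j,\ell}}k(x_i,x_j)$.
\end{itemize}
Multiplying by the score factors $s_{p,\ell}(x_i)$ and $s_{p,\ell}(x_j)$ where they appear, and summing over $\ell$, yields
\[
s_p(x_i)^\top s_p(x_j)\,k(x_i,x_j)+s_p(x_i)^\top\nabla_{x_j}k+s_p(x_j)^\top\nabla_{x_i}k+\mathrm{tr}(\nabla_{x_i}\nabla_{x_j}k),
\]
which is exactly $u_p(x_i,x_j)$.

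\textbf{Main obstacle.} The only non-routine step is justifying that derivatives of the kernel section lie in $\mathcal H$ and reproduce derivatives. This is the standard result for kernels with continuous mixed partials (Steinwart--Christmann, Cor.~4.36), and it holds for IMQ. The rest of the argument is pure bookkeeping.
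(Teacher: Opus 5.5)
Your proposal is correct and follows the same route as the paper's proof: you write $\frac1N\sum_i \mathcal T_p f(x_i)$ as $\langle f, \frac1N\sum_i \xi_p(x_i,\cdot)\rangle_{\mathcal H^d}$ with $\xi_p(x,\cdot)=s_p(x)k(x,\cdot)+\nabla_x k(x,\cdot)$, identify the supremum with the norm via Cauchy--Schwarz, and expand the squared norm into the double sum of $u_p$. Your componentwise expansion and your explicit appeal to the derivative-reproducing property for $C^{(1,1)}$ kernels fill in details that the paper leaves implicit.
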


%% file: S04ToySimulation.tex
\section{Numerical Simulation Study}
\subsection{Qualitative Analysis of Posterior Behavior}
The first emphasis of this work is to understand the distributional behavior of different DIS methods in inverse problem settings. To this end, we conduct a numerical study using a mixture-of-Gaussians prior under the noisy linear inverse problem $y = \mathcal A x + \epsilon$, for which the analytical posterior density is available. We visualize posterior sample behavior through pairwise scatter plots and compare the generated samples against ground-truth posterior samples. These visualizations provide an intuitive assessment of posterior fidelity, including recovery of the overall posterior geometry, correlation structure, concentration, and mode coverage. Detailed experiment settings are provided in Supp~\ref{supp:nume}.
\begin{figure}[t]
    \includegraphics[width=\linewidth]{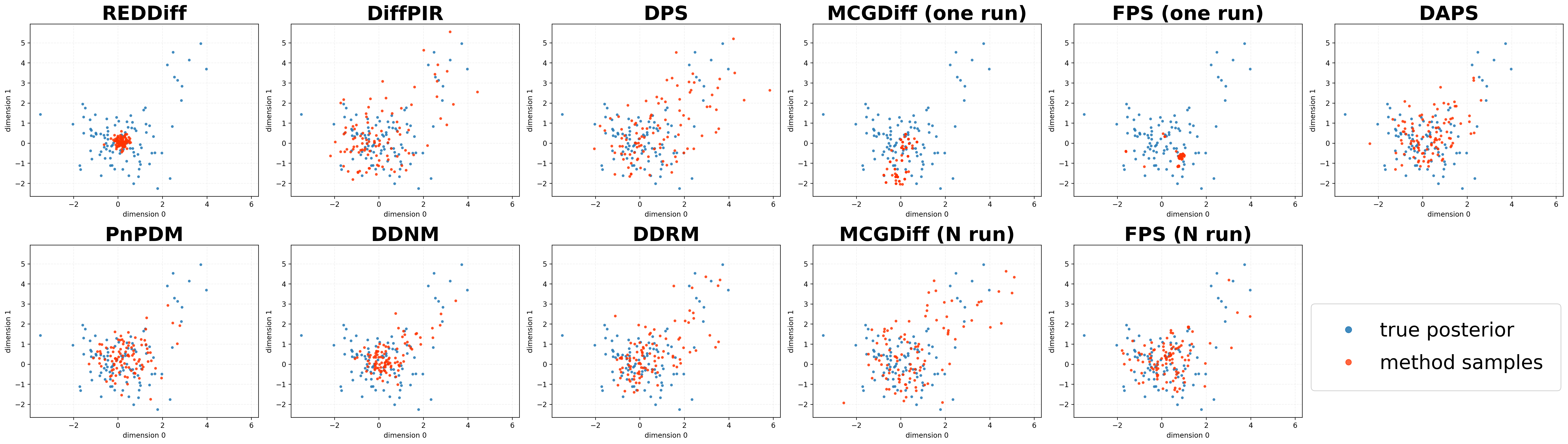}
    \caption{\textbf{Posterior sample comparison in toy model experiments} with an inverse problem setting $(d_x,d_y)=(16,14)$.
The scatter plots visualize the unobserved dimensions 0 and 1, whose prior follows a two-mode mixture-of-Gaussians distribution with modes centered at $(0,0)$ and $(3,3)$ with sample size $N=500,$ noise scale $\sigma=0.2$.
Blue and red points denote samples from the ground-truth posterior sampler and each diffusion sampler, respectively.\protect\footnotemark}
    \label{fig:num-scatter}
    \vspace{-10pt}
\end{figure}

\footnotetext{
\footnotesize
\textbf{Note:} \textsc{MCGDiff} and \textsc{FPS} use all returned particles from a single run, while \textsc{MCG\_Mul} and \textsc{FPS\_Mul} run the sampler independently $N$ times and retains the best particle from each run, following the evaluation scheme adopted in the original~\textsc{MCGDiff}~\cite{cardoso2023monteMCGdiff} experiments setting.
}
As shown in Fig.~\ref{fig:num-scatter}, some DIS methods fail to recover the weaker mode, while others preserve multimodal structure. Moreover, even within the same mode, different DIS methods exhibit diverse sample concentration behavior: some collapse a small limited region while others produce more dispersed samples that better align with the true posterior. These observations demonstrate that DIS methods have fundamentally different posterior behaviors despite generated reconstruction samples mostly fall in the plausible posterior region, highlighting the necessity of posterior fidelity evaluation beyond accuracy alone.

\subsection{Empirical KSD Finite Particle Analysis}
To validate our proposed score-KSD as a posterior fidelity diagnostic, we first study its finite-sample behavior using posterior samples in this numerical study. Although the population-level KSD of the true posterior satisfies $\mathrm{KSD}(q,p) = 0$ as described in Proposition~\ref{prop:valid measure}, the empirical score-KSD computed from a finite number of posterior samples is generally nonzero due to finite sample effects. We therefore investigate how score-KSD behaves with respect to sample size, observation strength, and measurement noise.

\begin{figure}[h]
\vspace{-1em}
    \centering
    \includegraphics[width=0.85\linewidth]{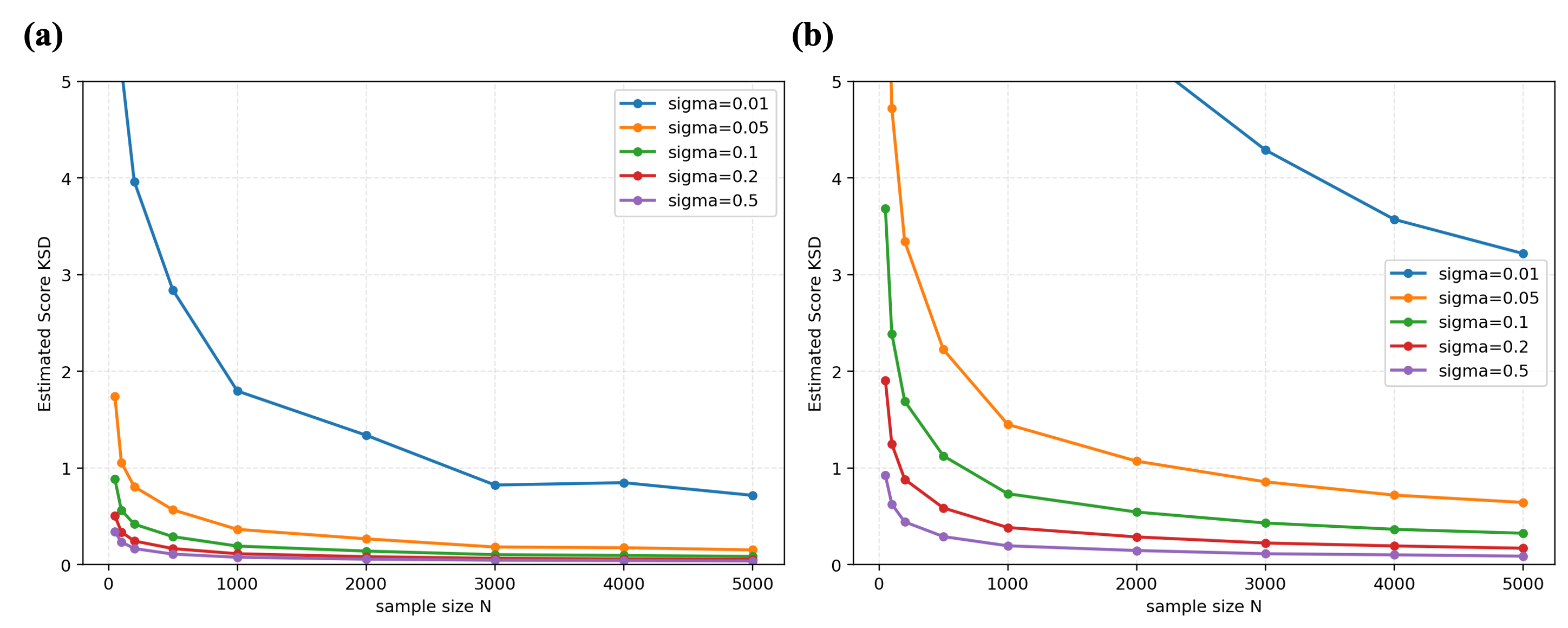}
    \caption{(a): Score-KSD curve of finite posterior samples under  $(d_y=14)$ and observations scale $\in[0.1, 0.75]$ with varying measurement noise scales, (b): Score-KSD curve of finite posterior samples under $(d_y=4)$ and observations scale $ = 3$ with varying measurement noise scales.}
    \label{fig:combine_high_low}
\end{figure}
In Fig.~\ref{fig:combine_high_low}, empirical score-KSD decreases monotonically as the number of samples N increases, approaching zero as the empirical distribution better approximates the true posterior. 
Moreover, larger measurement noise $\sigma_y$ and observation settings with weaker constraints or sparser observations both lead to smaller empirical score-KSD values, since they induce smoother and less sharply concentrated posterior geometries with reduced posterior-score magnitude, thereby reducing the finite-sample variability of score-KSD. 
Importantly, due to finite-sample effects, the empirical score-KSD computed from true posterior samples should not be interpreted as a strict lower bound of the metric, but rather as a finite-sample reference baseline in a controlled inverse-problem setting.
\subsection{Score-KSD Aligns with Posterior Visualization}
\vspace{-6pt}
After characterizing the finite-sample behavior of score-KSD, we next investigate whether score-KSD can meaningfully detect posterior fidelity of different DIS methods. We compute score-KSD using both the analytical posterior score derived from the exact posterior density and the approximate posterior score constructed from the likelihood model and the learned diffusion prior for each experimental setting.
We further compare the numerical score-KSD values of $\sigma=0.2$ in Table~\ref{tab:num_results_n500} with its posterior scatter visualizations in Fig.~\ref{fig:num-scatter}. We observe that methods exhibiting severe posterior mismatch, such as mode collapse or failure to recover weaker posterior modes, consistently produce larger score-KSD values using analytical score (e.g., RED-Diff: 1.57,  FPS (one run): 1.77). In contrast, methods that successfully recover both posterior modes and generate samples whose geometry better aligns with the analytical posterior achieve considerably smaller values (e.g., MCG-Diff (N run): 0.28, DiffPIR: 0.50, DDRM: 0.51). This qualitative consistency between the scatter visualizations and the corresponding score-KSD rankings provides empirical evidence that score-KSD meaningfully captures posterior-consistency behavior across different DIS methods.
\begin{table*}[t]
\caption{Root Mean Square Error (\textbf{RMSE}) for accuracy evaluation, \ksd{} using analytical posterior score (\textbf{An-KSD}), and score-KSD using approximate posterior score (\textbf{Ap-KSD}) under different noise levels using sample size $N=500$ with many weak measurements. Results are reported as mean and standard deviation across 5 noise draws generated from $N\sim (0,\sigma^2)$.}
\centering
\small
\resizebox{0.8\textwidth}{!}{
\setlength{\tabcolsep}{4pt}
\begin{tabular}{l|ccc|ccc}
\toprule
& \multicolumn{3}{c|}{$\boldsymbol{\sigma=0.2}$} 
& \multicolumn{3}{c}{$\boldsymbol{\sigma=0.5}$} \\
\cmidrule(lr){2-4} \cmidrule(lr){5-7}
\textbf{Method} 
& \textbf{RMSE $\downarrow$} & \textbf{An-KSD $\downarrow$} & \textbf{Ap-KSD $\downarrow$}
& \textbf{RMSE $\downarrow$} & \textbf{An-KSD $\downarrow$} & \textbf{Ap-KSD $\downarrow$} \\
\midrule
DAPS\cite{zhang2024improvingdiffusioninverseproblemDAPS}
& \textbf{1.04} (0.06) & 0.64 (0.05) & 0.63 (0.04)
& 1.57 (0.29) & 1.10 (0.21) & 1.04 (0.19) \\

DDNM\cite{wang2022zeroDDNM}
& 1.23 (0.25) & 1.23 (0.21) & 1.20 (0.18)
& 1.86 (0.42) & 2.12 (0.46) & 2.02 (0.41) \\

DDRM\cite{kawar2022denoisingDDRM}
& 1.08 (0.10) & 0.51 (0.05) & 0.50 (0.05)
& 1.29 (0.17) & 0.60 (0.10) & 0.57 (0.09) \\

DiffPIR\cite{zhu2023denoisingDiffPIR}
& 1.20 (0.11) & 0.50 (0.07) & 0.49 (0.06)
& 1.69 (0.28) & 0.99 (0.22) & 0.93 (0.19) \\

DPS\cite{chung2022diffusionDPS}
& 1.14 (0.01) & 0.74 (0.12) & 0.74 (0.13)
& \textbf{1.23} (0.04) & \textbf{0.25} (0.04) & \textbf{0.25} (0.04) \\
FPS (N runs)\cite{dou2024fpsdiffusion}
& 1.27 (0.23) & 0.96 (0.17) & 0.93 (0.15)
& 1.76 (0.34) & 1.20 (0.29) & 1.13 (0.26) \\
FPS (one run)
& 1.21 (0.21) & 1.77 (0.29) & 1.74 (0.27)
& 1.83 (0.32) & 1.71 (0.47) & 1.62 (0.42) \\

MCG-Diff (N runs)\cite{cardoso2023monteMCGdiff}
& 1.07 (0.03) & \textbf{0.28} (0.01) & \textbf{0.28} (0.01)
& 1.26 (0.08) & \textbf{0.25} (0.02) & \textbf{0.25} (0.02) \\
MCG-Diff (one run)
& 1.21 (0.21) & 1.09 (0.17) & 1.09 (0.17)
& 1.37 (0.15) & 0.85 (0.30) & 0.84 (0.29) \\

PnPDM\cite{wu2024PnPDM}
& 1.19 (0.18) & 1.04 (0.13) & 1.02 (0.10)
& 1.83 (0.38) & 1.61 (0.35) & 1.52 (0.31) \\
RED-Diff\cite{mardani2023reddiff1,song2023pseudoinverseReddiff2}
& 1.11 (0.08) & 1.57 (0.05) & 1.56 (0.05)
& 1.65 (0.29) & 2.14 (0.29) & 2.06 (0.26) \\
\hline
\rowcolor[HTML]{C0C0C0}
Finite Posterior Reference
& 1.13 (0.04) & 0.35 (0.00) &0.35 (0.00)
& 1.30 (0.06) & 0.24 (0.00) & 0.24 (0.00) \\
\bottomrule
\end{tabular}
}
\label{tab:num_results_n500}
\vspace{-16pt}
\end{table*}

Moreover, score-KSD computed using the approximate posterior score is close to the score-KSD using the analytical posterior score across different methods and noise scales, supporting that our proposed posterior score approximation based on the likelihood model and learned diffusion prior provides a practical and effective tool for posterior-consistency evaluation when the analytical posterior score is unavailable. Finally while posterior reference samples provide an important calibration baseline, they do not necessarily attain the minimum \ksd{} due to finite-sample error. In particular, some samplers may produce more regular or score-consistent finite sample sets under the chosen kernel, leading to slightly smaller \ksd{}  values than finite posterior samples. We therefore interpret \ksd{}  primarily as a posterior-consistency diagnostic based on finite-sample score information within the same task under finite samples, rather than as an absolute population-level discrepancy metric.

%% file: S05RealData.tex
\section{Real Data Experiments}

\begin{figure}[t]
\centering
\begin{subtable}[c]{0.5\linewidth}
\label{tab:InvScatteringComparison}
\resizebox{\linewidth}{!}{
\centering
\begin{tabular}{lcc|cc}
\toprule
& \multicolumn{4}{c}{\textbf{Linear Inverse Scattering ($\boldsymbol{\sigma=0.0001}$)}} \\
\cmidrule(lr){2-5}
& \multicolumn{2}{c|}{\textbf{180 views}} 
& \multicolumn{2}{c}{\textbf{360 views}}\\
\cmidrule(lr){2-3}\cmidrule(lr){4-5}
\textbf{Method} 
& \textbf{PSNR(std) $\uparrow$} & \textbf{KSD $\downarrow$}
& \textbf{PSNR(std) $\uparrow$} & \textbf{KSD $\downarrow$}\\
\midrule
DAPS          & 27.81(0.10) & 3.70    & 29.21(0.12) & 5.63    \\
DDNM          & \textbf{35.14}(0.10) & 5.04    & \textbf{36.25}(0.11) & 8.04    \\
DDRM          & 26.97(0.01) & 30.53   & 31.17(0.05) & 21.92   \\
DPS           & 31.42(0.19) & 96.95   & 31.65(0.19) & 234.83  \\
DiffPIR       & 33.41(0.14) & 11.18   & 33.64(0.15) & 19.56   \\
FPS           & 27.69(0.02) & \underline{2.65}    & 30.45(0.08) & \underline{3.58}    \\
MCG-Diff      & 27.36(0.03) & \textbf{1.94}    & 29.52(0.13) & \textbf{2.18}    \\
PnPDM         & 32.94(0.16) & 11.81   & 34.83(0.16) & 18.00   \\
RED-Diff       & \underline{35.09}(0.09) & 8.59    & \underline{36.24}(0.10) & 11.19   \\
\hline
\rowcolor[HTML]{C0C0C0}
Uncond.       & 8.98(0.77)  & 901.65  & 8.98(0.77)  & 1783.04 \\
\rowcolor[HTML]{C0C0C0}
Noise         & 12.26(0.04) & 3182.67 & 12.26(0.04) & 6131.81 \\
\bottomrule
\end{tabular}
}
\caption{Results for linear inverse scattering task.}
\end{subtable}
\hfill
\begin{subfigure}[c]{0.48\linewidth}
\centering
\vspace{2.5em}
\includegraphics[width=\linewidth]{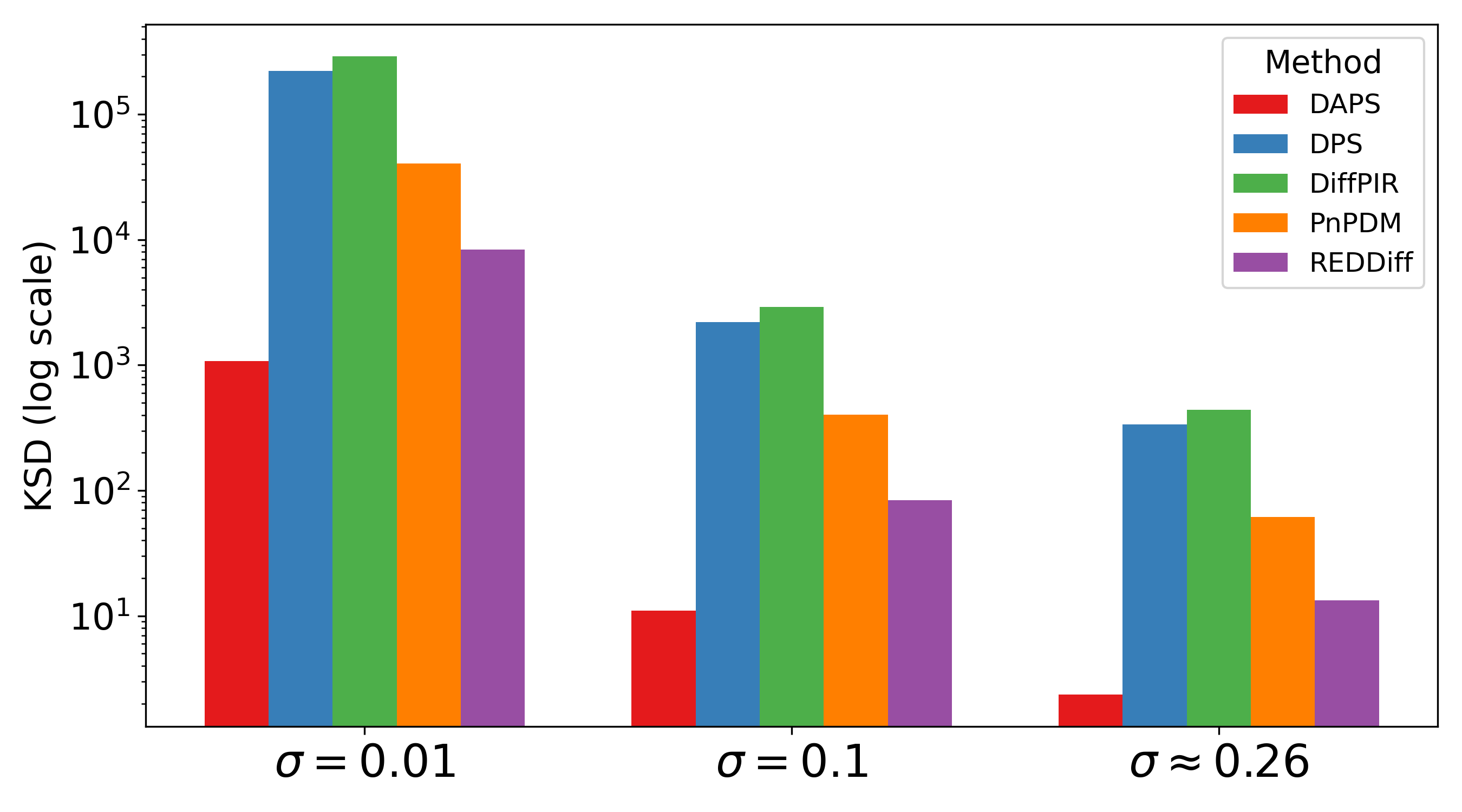}
\caption{Score-KSD with various measurement noise scales in 20-view CT reconstruction task.}
\label{fig:ct_20_views}
\end{subfigure}
\caption{Performance comparison and score-KSD behavior.}
\label{fig:table_figure_side_by_side}
\vspace{-6pt}
\end{figure}

\begin{table*}[t]
\caption{Results comparison of different DIS methods in MRI and CT reconstruction tasks (averaged value across 50 samples for one target image). Experiments are held on MRI measurements degraded with $\sigma=0.01$, in-distribution (ID) CT measurements degraded with $\sigma=0.1$, and out-of-distribution(OOD) CT measurements degraded with $\sigma=0.1$. (\textbf{Bold} marks the best value for each reported metric, and \underline{underline} marks the second-best value.)}
\label{tab:CTMRIInvcomparison}
\resizebox{\textwidth}{!}{
\begin{tabular}{lcc|cc|cc|cc|cc}
\toprule
& \multicolumn{4}{c|}{\textbf{MRI}} 
& \multicolumn{4}{c|}{\textbf{CT (ID)}} 
& \multicolumn{2}{c}{\textbf{CT (OOD)}} \\
\cmidrule(lr){2-5}\cmidrule(lr){6-9}\cmidrule(lr){10-11}
& \multicolumn{2}{c|}{\textbf{AR = 8}} 
& \multicolumn{2}{c|}{\textbf{AR = 4}}
& \multicolumn{2}{c|}{\textbf{20 views}} 
& \multicolumn{2}{c|}{\textbf{60 views}}
& \multicolumn{2}{c}{\textbf{20 views}}\\
\cmidrule(lr){2-3}\cmidrule(lr){4-5}\cmidrule(lr){6-7}\cmidrule(lr){8-9}\cmidrule(lr){10-11}
Method 
& \textbf{PSNR(std) $\uparrow$} & \textbf{KSD $\downarrow$}
& \textbf{PSNR(std) $\uparrow$} & \textbf{KSD $\downarrow$}
& \textbf{PSNR(std) $\uparrow$} & \textbf{KSD $\downarrow$}
& \textbf{PSNR(std) $\uparrow$} & \textbf{KSD $\downarrow$}
& \textbf{PSNR(std) $\uparrow$} & \textbf{KSD $\downarrow$}\\
\midrule
DAPS    & \underline{30.38}(0.18) & \underline{4.92} & \underline{33.00}(0.05) & \underline{7.24}   &  28.23(0.06) & \textbf{11.01} & \underline{35.15}(0.08) & \textbf{48.96}    & \underline{25.03}(0.06) & \textbf{17.44} \\
DiffPIR & 24.32(0.35) & 95.00  &25.52(0.27) & 127.58 &  22.02(0.17) & 2903.88 & 23.18(0.12) & 16291.87 & 20.20(0.09) & 3407.80 \\
DPS     & 27.77(0.44) & 41.40  &29.73(0.13) & 62.14  &  \textbf{31.52}(0.19) & 2211.65 & 34.4(0.33)  & 14777.64 & 24.79(0.21) & 2108.72 \\
PnPDM   & 28.15(0.03) & 11.62  &28.81(0.02) & 17.59  &  27.40(0.48) & 404.25  & 32.11(0.07) & 756.99   & 24.36(0.14) & 382.65 \\
RED-Diff & \textbf{32.66}(0.06) & \textbf{2.82}   &\textbf{35.18}(0.03) & \textbf{3.61}   &  \underline{31.31}(0.09) & \underline{83.97}  & \textbf{37.75}(0.05) & \underline{139.54}   & \textbf{25.36}(0.07) & \underline{111.31} \\
\hline
\rowcolor[HTML]{C0C0C0}
Uncond. & 6.68(0.01)  & 1315.57 &6.68(0.01)  & 946.50 &  14.16(1.24) & 20918.91 &    14.16(1.24)&77759.81&14.16(1.24) & 20918.91  \\
\rowcolor[HTML]{C0C0C0}
Noise   & 18.42(1.07) & 1363.02 &18.17(1.02) & 836.29 &  5.43(0.02)  & 102111.27 &  5.43(0.02) & 379789.52 &5.43(0.02)  & 102111.27 \\
\bottomrule
\end{tabular}
}
\vspace{-10pt}
\end{table*}
\subsection{Experiment Setup}
\label{sec:real_setup}
\vspace{-6pt}
\paragraph{Tasks and Datasets.}
We evaluate the posterior fidelity performance of DIS methods through our proposed score-KSD on three representative real-data inverse problems:
(i) \emph{linear inverse scattering},
(ii) \emph{under-sampling MRI reconstruction},
and (iii) \emph{sparse-view CT reconstruction}.

For the linear inverse scattering (data from \cite{wiesner2019cytopacqlinearinverse}) and multi-coil MRI (fastMRI knee data from \citep{zbontar2019fastmriopendatasetbenchmarks}), we follow the corresponding experimental setups in InverseBench~\cite{zheng2025inversebench}. 
For inverse scattering, we consider the number of receivers $M = 180, 360$ and the noise scale $\sigma = 0.0001$,
while for sparse-sampling MRI, we evaluate $\times4$ and $\times8$ acceleration rate (AR) and noise scale $\sigma=0.01$. 

For sparse-view CT (SVCT) task, we conduct experiments using the LIDC-IDRI dataset~\cite{lidc}. The original CT volumes are resampled to a slice thickness of 1 mm, and each slice is resized to $256\times256$. The training set consists of 23,040 images, and in-distribution evaluation is conducted on the hold-out data.
The diffusion model is trained using the pipeline proposed in~\cite{EDM} and the same trained model is used for all PnPDP methods. For out-of-distribution (OOD) evaluation, we use Lung-PET-CT-Dx dataset~\cite{Lung-PET-CT-Dx} from cancer patients. 
We directly use the pretrained diffusion models from LIDC-IDRI dataset as the prior for reconstructing images from Lung-PET-CT-Dx dataset without any adaptation, thus as OOD task with imperfect or mismatch priors.

\vspace{-10pt}
\paragraph{Evaluation Procedure.}
For each task, we first sample a noise $\epsilon\sim N(0,\sigma^2),$ and generate the simulated observation $y = Ax +\epsilon.$ We run each DIS method $N=50$ times with different random seeds to generate posterior samples for the simulated observation $y$. We evaluation reconstruction accuracy using PSNR and accessing posterior fidelity using the proposed score-KSD metric. See more details for hyperparameter settings in Appendix~\ref{supp:hyper_real}.

\subsection{Results and Findings}
\paragraph{Score-KSD distinguishes meaningful posterior behavior from trivial baselines.}
We add the unconditional prior sampling and pure noise images as the trivial baseline for an intuitive comparison. Across all real-world inverse problem tasks in Table~\ref{tab:CTMRIInvcomparison}, all DIS methods achieve substantially smaller score-KSD values than these two trivial baselines within the same task. This indicates that \ksd{} meaningfully captures the posterior behaviors.
\vspace{-6pt}
\paragraph{Score-KSD ranking exhibits partial cross-task consistency.}
We observe partially consistent score-KSD rankings across different inverse-problem tasks. Some DIS methods consistently achieve better performance in \ksd{} across multiple settings, as demonstrated in Fig.~\ref{fig:abs}(b)-(d)), suggesting a stable posterior fidelity behavior. Meanwhile, we also observe that the \ksd{} rankings remain task-dependent, consistent with the fact that \ksd{} is a within-task posterior-consistency diagnostic, since posterior score can vary substantially with forward operators, noise scales, \emph{etc}.
\vspace{-6pt}
\paragraph{Score-KSD is stable for different test images within the same task.}
Although \ksd{} values are not directly comparable across different inverse problems, we observe stable \ksd{} behavior across different test images within the same task setting as shown in Appendix (Table~\ref{tab:combined_ksd_ct} and~\ref{tab:combined_ksd_ct_cancer}). This finding supports the robustness of \ksd{} as a within-task posterior-consistency diagnostic.
\vspace{-6pt}
\paragraph{Score-KSD captures distributional behavior beyond accuracy.} 
Methods with similar accuracy can exhibit substantially different \ksd{} values (Fig.~\ref{fig:abs}(b)-(d)) and pixel variance maps (Fig.~\ref{fig:mri-ar4}), and we do not observe any monotonic relationship in which better reconstruction accuracy necessarily corresponds to better posterior fidelity. These results highlight that accuracy alone fails to fully characterize the behavior of stochastic DIS algorithms, and our proposed \ksd{} serves as an important complementary metric for evaluating posterior consistency behavior beyond accuracy.
\vspace{-6pt}
\paragraph{Ablation Study on OOD task and hyperparamter sensitivity.}
We further explored OOD inverse problems and hyperparameter sensitivity, discovering that DPS is highly sensitive to hyperparameter choices while DAPS requires hyperparameter adjustment to obtain reasonable reconstruction quality (Table~\ref{tab:dps-hyper} and Sec.~\ref{supp:hyper_real} in Appendix). OOD settings consistently lead to degraded reconstruction quality together with larger \ksd{} values, indicating worse posterior consistency (Table~\ref{tab:CTMRIInvcomparison}).
\begin{figure}
    \centering
    \includegraphics[width=\linewidth]{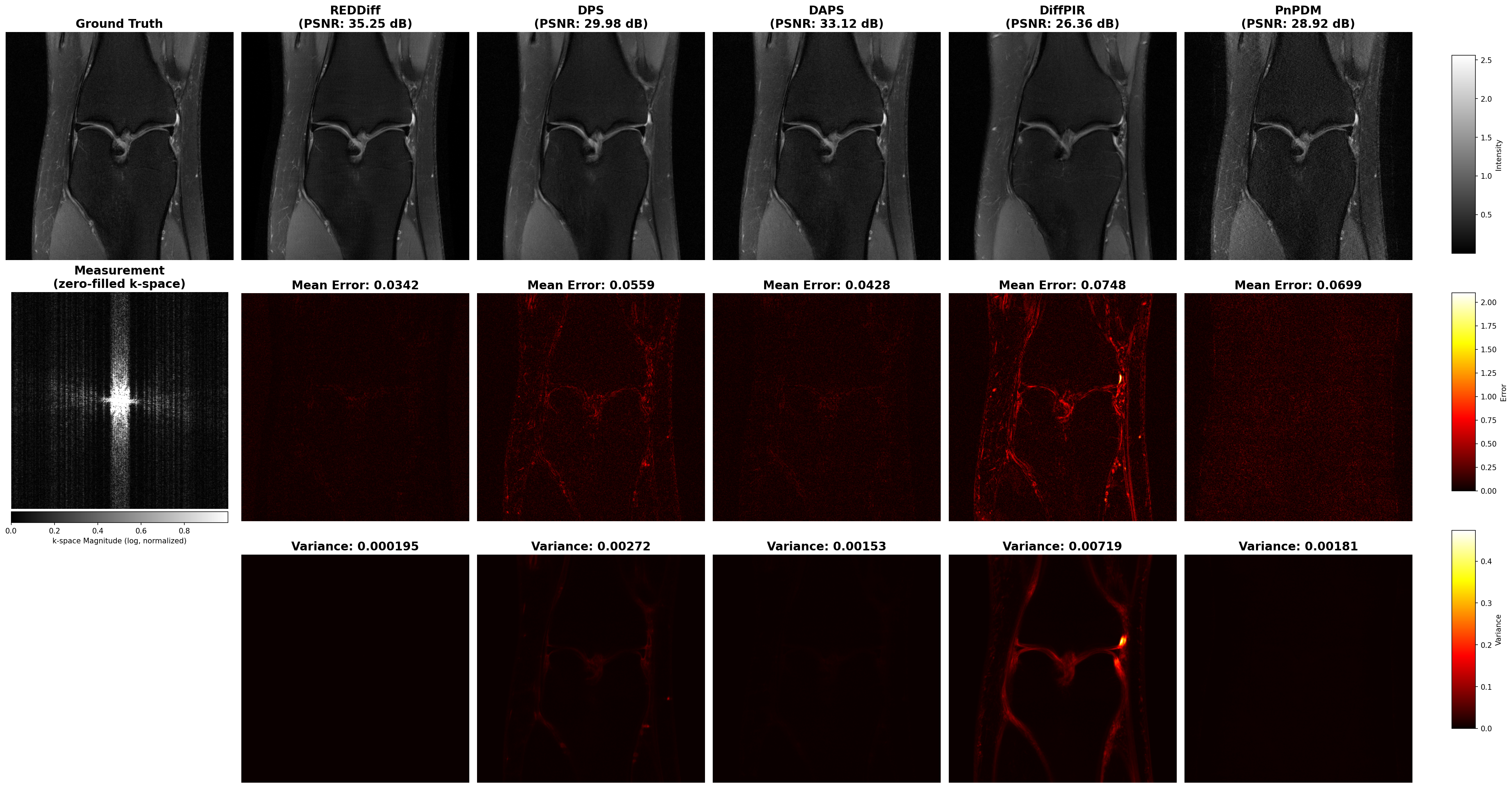}
    \caption{Under-sampling MRI reconstruction under $\times4$ acceleration rate (AR=4).}
    \label{fig:mri-ar4}
\end{figure}

%% file: S06Discussion.tex
\section{Discussion and Conclusion}

In this work, we identify the critical \textit{"Accuracy Trap}" in DIS benchmarking, and study their posterior behavior from a distributional perspective. Motivated by that, we propose the theory-grounded and ground-truth-free \ksd{} as a posterior-aware diagnostic for evaluating posterior fidelity. Through controlled simulations, and real-world inverse problems, our results suggest that \ksd{} constructed from the likelihood model and learned diffusion prior, provides a practical and meaningful tool for posterior-consistency evaluation when neither true posterior samples nor posterior density are accessible. 
One limitation of the proposed \ksd{} framework is that it requires  the noise scale $\sigma_y$ in the inverse problem, which may not be directly accessible in practice and require additional estimation. How inaccuracies in such estimates affect the \ksd{} evaluation remains an important direction for future work.

%% file: supp-ksd.tex
\newpage
\section{Details for \ksd{}}
\label{supp:ksd}
 
\subsection{IMQ Kernel }
We use the inverse multiquadric (IMQ) kernel
\[
k(z,z')
=
\left(c^2+\|z-z'\|_2^2\right)^\beta,
\qquad \beta\in(-1,0),
\]
with \(\beta=-1/2\). The scale parameter \(c\) is chosen adaptively as
$
c=\frac{1}{\operatorname{median}(s(A))+1},
$ to keep a fair comparison across each method within the same task, where \(s(A)\) denotes the singular values of the forward operator \(A\).

The IMQ kernel is widely used in the KSD literature due to its strong empirical performance and favorable theoretical properties for detecting distributional mismatch, particularly in the tails \cite{gorham2017measuring}. While the choice of kernel is not unique and can influence the absolute KSD values and relative rankings, the IMQ kernel provides a robust and sensitive measure of posterior inconsistency in practice.

\subsection{Prior Score Approximation Details}
For approximate posterior score in \ksd{} computation, we approximate the clean prior score using the pretrained diffusion score network evaluated near the clean-data limit. In our EDM implementation, the network is parameterized by the noise scale \(\sigma\) with noise level \(\sigma_{\mathrm{score}}=0.3\) and draw \(M=4\) independent Gaussian perturbations for each sample \(x\). The approximate prior score is computed as
\[
\widehat s_{\theta,0}(x)
=
\frac{1}{M}\sum_{m=1}^M
\alpha_{\sigma_{\mathrm{score}}}
s_\theta
\left(
\alpha_{\sigma_{\mathrm{score}}}x
+
\sigma_{\mathrm{score}} z_m,
\sigma_{\mathrm{score}}
\right),
\qquad
z_m\sim\mathcal N(0,I).
\]
We then construct the approximate posterior score by combining the likelihood score with the approximated prior score.

%% file: supp-simulation.tex
\section{Numerical Simulations Settings and Additional Results}
\label{supp:nume}
\paragraph{Prior Distribution}

We define a structured prior distribution over the unknown signal

$x \in \mathbb{R}^{16}.$

The first two coordinates \((x_1, x_2)\) follow a two-component Gaussian mixture model, while the remaining coordinates are modeled as independent standard Gaussian variables.

Specifically, let
\[
x =
\begin{bmatrix}
x_{\mathrm{mog}} \\
x_{\mathrm{tail}}
\end{bmatrix},
\quad
x_{\mathrm{mog}} = (x_1, x_2)^\top \in \mathbb{R}^2,
\quad
x_{\mathrm{tail}} = (x_3,\dots,x_{16})^\top \in \mathbb{R}^{14}.
\]

The mixture prior on the first two dimensions is
\[
p(x_{\mathrm{mog}})
=
\sum_{k=1}^{2} \pi_k
\mathcal{N}(x_{\mathrm{mog}};\mu_k,\Sigma_k),
\]
with mixture weights
\[
\pi_1 = 0.8, 
\quad 
\pi_2 = 0.2,
\]
means
\[
\mu_1 =
\begin{bmatrix}
0 \\
0
\end{bmatrix},
\quad
\mu_2 =
\begin{bmatrix}
3 \\
3
\end{bmatrix},
\]
and covariance matrices
\[
\Sigma_1 =
\begin{bmatrix}
1 & 0 \\
0 & 1
\end{bmatrix},
\quad
\Sigma_2 =
\begin{bmatrix}
2 & 0 \\
0 & 2
\end{bmatrix}.
\]

For the remaining coordinates, we use an independent Gaussian tail prior:
\[
x_{\mathrm{tail}} \sim \mathcal{N}(0, I_{14}).
\]

Therefore, the full prior factorizes as
\begin{align*}
    p(x)
&=p(x_{\mathrm{mog}}) \, p(x_{\mathrm{tail}}), \\
& = \left[ 0.8 \, \mathcal{N}
\left(
\begin{bmatrix}
x_1 \\
x_2
\end{bmatrix};
\begin{bmatrix}
0 \\
0
\end{bmatrix},
\begin{bmatrix}
1 & 0 \\ 0 & 1
\end{bmatrix}
\right) + 0.2 \, \mathcal{N}
\left(
\begin{bmatrix}
x_1 \\
x_2
\end{bmatrix};
\begin{bmatrix}
3 \\ 3
\end{bmatrix},
\begin{bmatrix}
2 & 0 \\ 0 & 2
\end{bmatrix}
\right) \right] \prod_{j=3}^{16} \mathcal{N}(x_j;0,1).
\end{align*}

\paragraph{Forward Operator and Noise Model}

We consider four experimental settings formed by combining two forward operators and two noise models:

\subparagraph{Forward Operators}

\textbf{(A1) many-weak-observation.}  
The forward operator \( A \in \mathbb{R}^{14 \times 16} \) observes most coordinates of \(x\) with individual scaling:
$y_i = s_i \, x_{\mathcal{I}_i}, \quad i = 1, \dots, 14,$
where$\mathcal{I} = \{ 3,4 \dots, 16\},
\quad
s = (0.1, 0.15, 0.2, \dots, 0.75)^\top.$
The forward matrix \(A \in \mathbb{R}^{14 \times 16}\) is defined by
\[
A_{i,j} =
\begin{cases}
s_i, & \text{if } j = \mathcal{I}_i, \\
0, & \text{otherwise}.
\end{cases}
\]


\textbf{(A2) few-strong-observation.}  
The forward operator \( A \in \mathbb{R}^{4 \times 16} \) observes only a small subset of coordinates with uniform scaling:
$y_i = 3 \, x_i, \quad i = 1, \dots, 4.$
The matrix \(A \in \mathbb{R}^{4 \times 16}\) is given by
\[
A_{i,j} =
\begin{cases}
3, & \text{if } j = i,\; i \in \{1,2,3,4\}, \\
0, & \text{otherwise}.
\end{cases}
\]

\subparagraph{Noise Models}
$y = Ax +\epsilon, \quad \epsilon \sim N(0, \sigma^2)$
\textbf{(N1).}   $\sigma = 0.5$
\textbf{(N2).}   $\sigma = 0.2$

\subparagraph{Ground Truth $x$:}
\[
x_{\mathrm{true}}
=
[\,3.0,\;2.0,\;0.5,\;-0.5,\;1.0,\;0.8,\;-0.6,\;1.1,\;-0.9,\;0.4,\;0.0,\;-1.2,\;0.7,\;-0.3,\;0.5,\;-0.8\,]^{\top}.
\]
\begin{table*}[t]
\caption{RMSE, \ksd{} using analytical posterior score (An-KSD), and score-KSD using approximate posterior score (Ap-KSD) under different noise levels using sample size $N=100$ with many weak observations. Results are reported as mean and standard deviation across 5 noise draws generated from $N\sim (0,\sigma^2)$}
\centering
\resizebox{\textwidth}{!}{
\setlength{\tabcolsep}{4pt}
\begin{tabular}{l|ccc|ccc}
\toprule
& \multicolumn{3}{c|}{$\sigma=0.2$} 
& \multicolumn{3}{c}{$\sigma=0.5$} \\
\cmidrule(lr){2-4} \cmidrule(lr){5-7}
Method 
& RMSE & An-KSD & Ap-KSD
& RMSE & An-KSD & Ap-KSD \\
\midrule
DAPS\cite{zhang2024improvingdiffusioninverseproblemDAPS}
& \textbf{0.92} (0.04) & 0.72 (0.05) & 1.02 (0.08)
& 1.77 (0.30) & 1.35 (0.27) & 1.78 (0.35) \\

DDNM\cite{wang2022zeroDDNM}
& 1.00 (0.11) & 1.18 (0.13) & 1.54 (0.18)
& 2.22 (0.49) & 2.56 (0.52) & 3.19 (0.65) \\

DDRM\cite{kawar2022denoisingDDRM}
& \textbf{0.92} (0.04) & 0.67 (0.06) & 0.96 (0.09)
& 1.30 (0.13) & 0.70 (0.13) & 1.03 (0.18) \\

DiffPIR\cite{zhu2023denoisingDiffPIR}
& 1.06 (0.05) & 0.61 (0.06) & 0.88 (0.09)
& 1.92 (0.33) & 1.23 (0.28) & 1.62 (0.34) \\

DPS\cite{chung2022diffusionDPS}
& 0.95 (0.01) & \textbf{0.42} (0.02) & \textbf{0.62} (0.02)
& \textbf{1.17} (0.06) & 0.38 (0.07) & 0.57 (0.09) \\

FPS (N run)\cite{dou2024fpsdiffusion}
& 1.06 (0.09) & 1.00 (0.14) & 1.36 (0.19)
& 2.04 (0.39) & 1.52 (0.32) & 1.97 (0.39) \\

FPS (one run)
& 1.11 (0.25) & 1.69 (0.17) & 2.15 (0.23)
& 1.97 (0.38) & 2.34 (0.41) & 2.92 (0.52) \\

MCG-DIFF (N run)\cite{cardoso2023monteMCGdiff}
& 0.99 (0.02) & \textbf{0.42} (0.02) & 0.63 (0.03)
& 1.21 (0.03) & \textbf{0.24} (0.01) & \textbf{0.39} (0.03) \\

MCG-DIFF (one run)
& 1.11 (0.08) & 0.90 (0.17) & 1.18 (0.19)
& 1.27 (0.18) & 0.72 (0.12) & 0.95 (0.14) \\

PnPDM\cite{wu2024PnPDM}
& 1.00 (0.09) & 1.03 (0.10) & 1.39 (0.14)
& 2.20 (0.48) & 1.98 (0.45) & 2.51 (0.56) \\

RED-DIFF\cite{mardani2023reddiff1,song2023pseudoinverseReddiff2}
& 0.99 (0.05) & 1.61 (0.07) & 2.00 (0.12)
& 1.91 (0.35) & 2.49 (0.43) & 3.11 (0.56) \\

Posterior reference
& 1.08 (0.01) & \textbf{0.36} (0.02) & \textbf{0.54} (0.03)
& 1.29 (0.03) & \textbf{0.24} (0.00) & \textbf{0.31} (0.00) \\
\bottomrule
\end{tabular}
}
\label{tab:num_results_n100}
\end{table*}

\begin{table*}[t]
\caption{RMSE, \ksd{} using analytical posterior score (An-KSD), and score-KSD using approximate posterior score (Ap-KSD) under different noise levels using sample size $N=500$ with few strong observations. Results are reported as mean and standard deviation across 5 noise draws generated from $N\sim (0,\sigma^2)$.}
\centering
\resizebox{\textwidth}{!}{
\setlength{\tabcolsep}{4pt}
\begin{tabular}{l|ccc|ccc}
\toprule
& \multicolumn{3}{c|}{$\sigma=0.2$}
& \multicolumn{3}{c}{$\sigma=0.5$} \\
\cmidrule(lr){2-4} \cmidrule(lr){5-7}
Method
& RMSE & An-KSD & Ap-KSD
& RMSE & An-KSD & Ap-KSD \\
\midrule

DAPS\cite{zhang2024improvingdiffusioninverseproblemDAPS}
& 1.10 (0.00) & 0.60 (0.00) & 0.59 (0.00)
& 1.11 (0.00) & 0.48 (0.00) & 0.47 (0.00) \\

DDNM\cite{wang2022zeroDDNM}
& 0.89 (0.01) & 1.46 (0.02) & 1.46 (0.02)
& 0.82 (0.01) & 0.89 (0.02) & 0.90 (0.01) \\

DDRM\cite{kawar2022denoisingDDRM}
& 1.05 (0.00) & 0.73 (0.01) & 0.73 (0.00)
& 1.06 (0.01) & 0.44 (0.00) & 0.44 (0.00) \\

DiffPIR\cite{zhu2023denoisingDiffPIR}
& 1.24 (0.00) & 0.67 (0.00) & 0.68 (0.00)
& 1.25 (0.00) & 0.44 (0.00) & 0.44 (0.00) \\

DPS\cite{chung2022diffusionDPS}
& 1.17 (0.00) & \textbf{0.50} (0.00) & \textbf{0.50} (0.00)
& 1.17 (0.01) & \textbf{0.44} (0.01) & \textbf{0.44} (0.01) \\

FPS (N run)\cite{dou2024fpsdiffusion}
& 0.95 (0.01) & 2.02 (0.11) & 2.02 (0.10)
& 0.98 (0.01) & 0.71 (0.07) & 0.72 (0.08) \\

FPS (one run)
& 1.02 (0.08) & 9.65 (0.57) & 9.66 (0.57)
& 1.01 (0.06) & 1.96 (0.08) & 1.99 (0.08) \\

MCG-DIFF (N run)\cite{cardoso2023monteMCGdiff}
& 1.10 (0.01) & 1.19 (0.05) & 1.18 (0.05)
& 1.13 (0.02) & \textbf{0.44} (0.01) & \textbf{0.44} (0.00) \\

MCG-DIFF (one run)
& 1.17 (0.12) & 13.08 (4.08) & 13.10 (4.09)
& 1.12 (0.08) & 1.72 (0.32) & 1.71 (0.31) \\

PnPDM\cite{wu2024PnPDM}
& \textbf{0.96} (0.00) & 0.68 (0.00) & 0.67 (0.00)
& \textbf{0.97} (0.01) & 0.52 (0.01) & 0.50 (0.01) \\

RED-DIFF\cite{mardani2023reddiff1,song2023pseudoinverseReddiff2}
& 0.85 (0.01) & 1.16 (0.00) & 1.13 (0.00)
& 0.86 (0.01) & 1.18 (0.01) & 1.15 (0.01) \\

Posterior reference
& 1.17 (0.01) & 1.31 (0.00) & 1.31 (0.00)
& 1.20 (0.01) & 0.64 (0.00) & 0.64 (0.00) \\

\bottomrule
\end{tabular}
}
\label{tab:toy_lowd_n500}
\end{table*}

%% file: supp-exp.tex
\section{Additional Real Data Experiment}
\subsection{Additional Results}
\begin{figure}[h]
    \centering
    \includegraphics[width=0.95\linewidth]{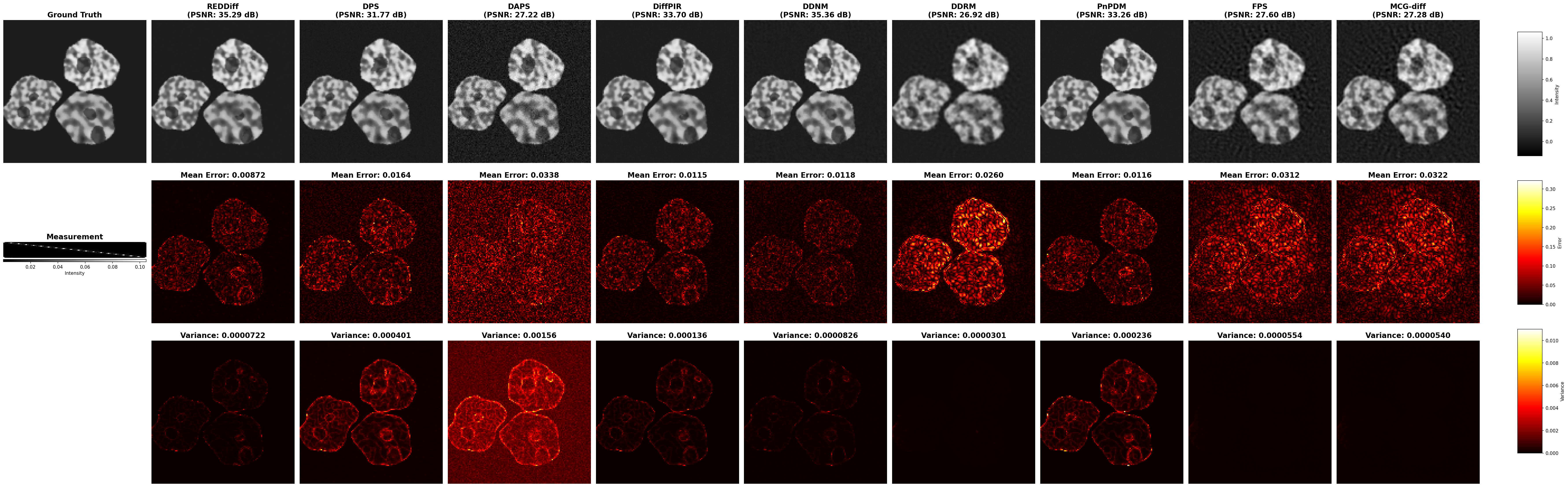}
    \caption{Inverse linear scattering with 180 receivers}
    \label{fig:inv-180}
\end{figure}

\begin{figure}[h]
    \centering
    \includegraphics[width=0.95\linewidth]{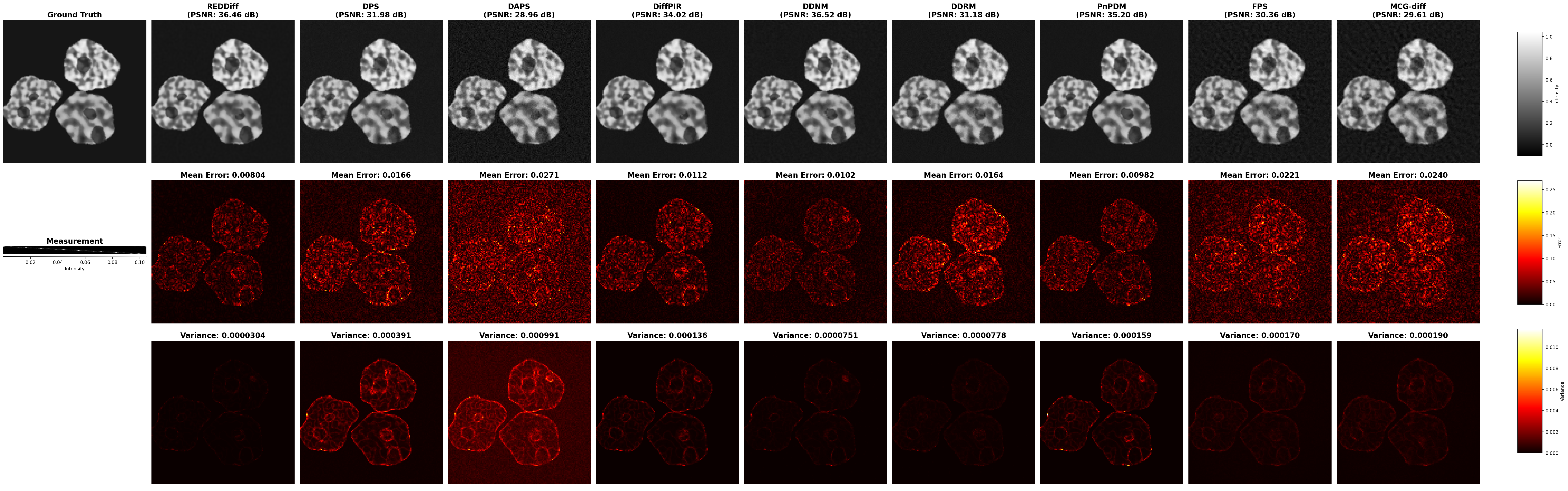}
    \caption{Inverse linear scattering with 360 receivers}
    \label{fig:inv-360}
\end{figure}

\begin{figure}[h]
    \centering
    \includegraphics[width=0.9\linewidth]{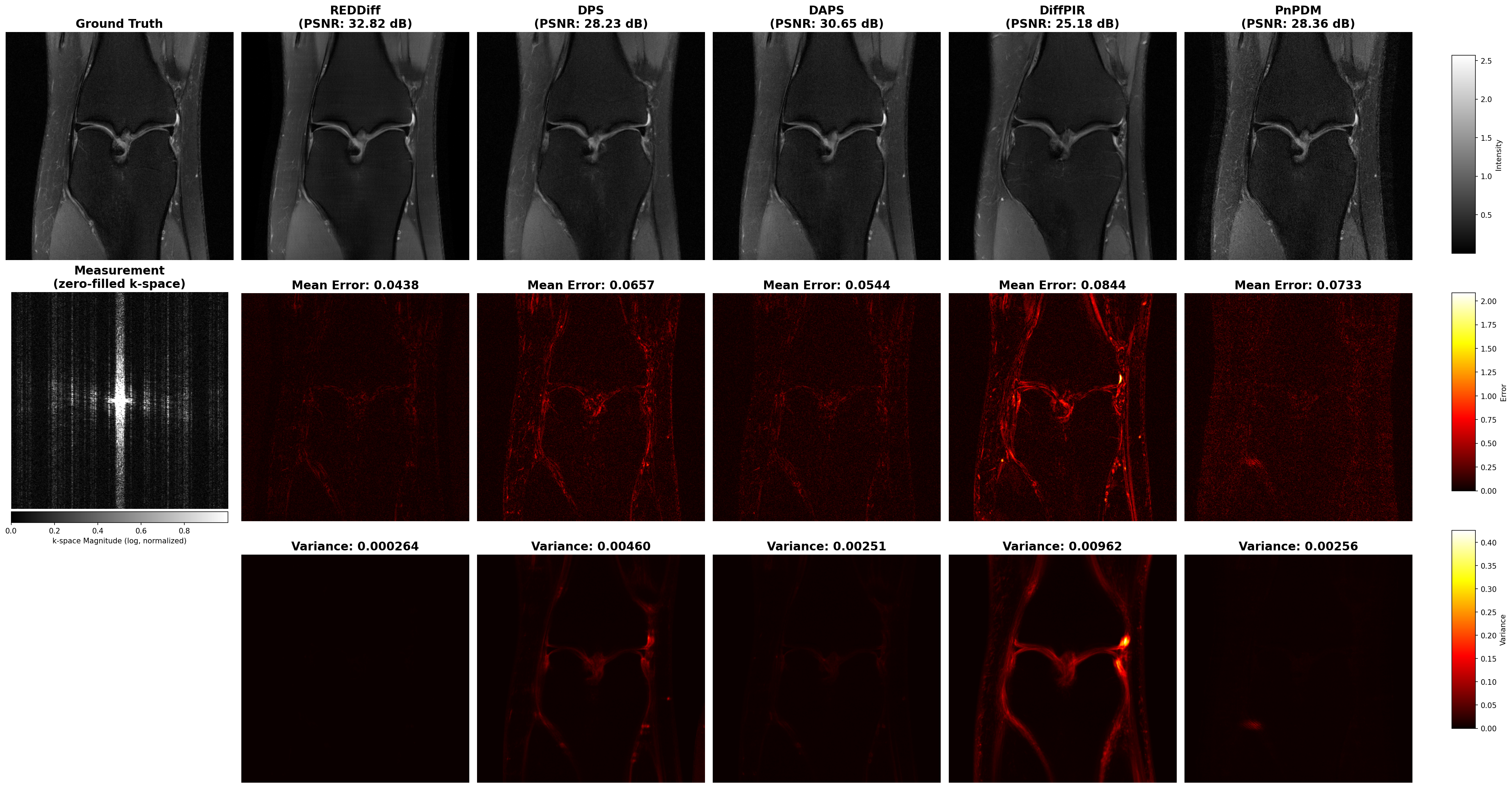}
    \caption{{Sparse-sampling MRI under $\times8$ acceleration rate (AR=8).}}
    \label{fig:mri-ar8}
\end{figure}

\begin{figure}[h]
    \centering
    \includegraphics[width=0.9\linewidth]{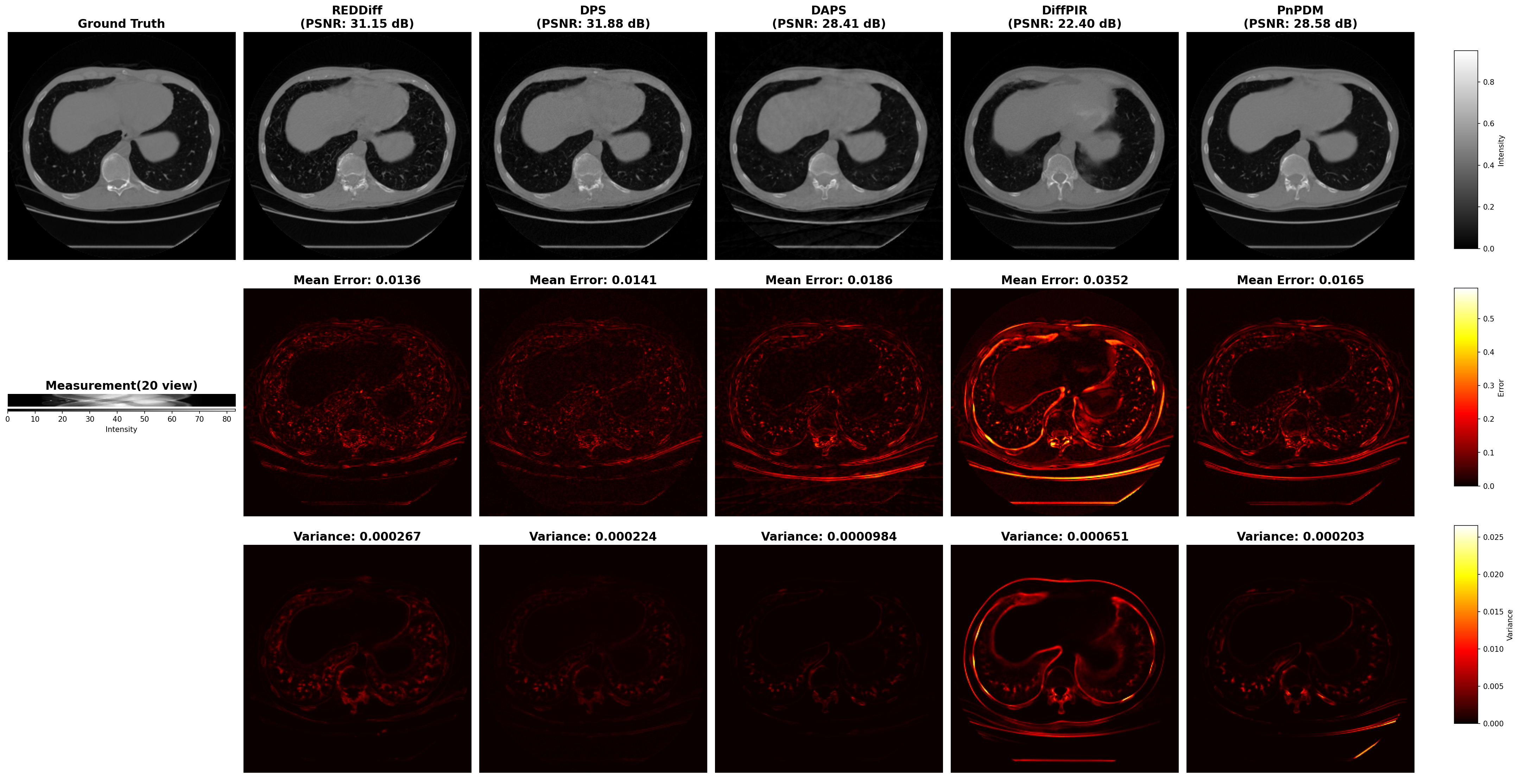}
    \caption{{20 view CT reconstruction (InD)}}
    \label{fig:ct20}
\end{figure}

\begin{figure}[h]
    \centering
    \includegraphics[width=0.9\linewidth]{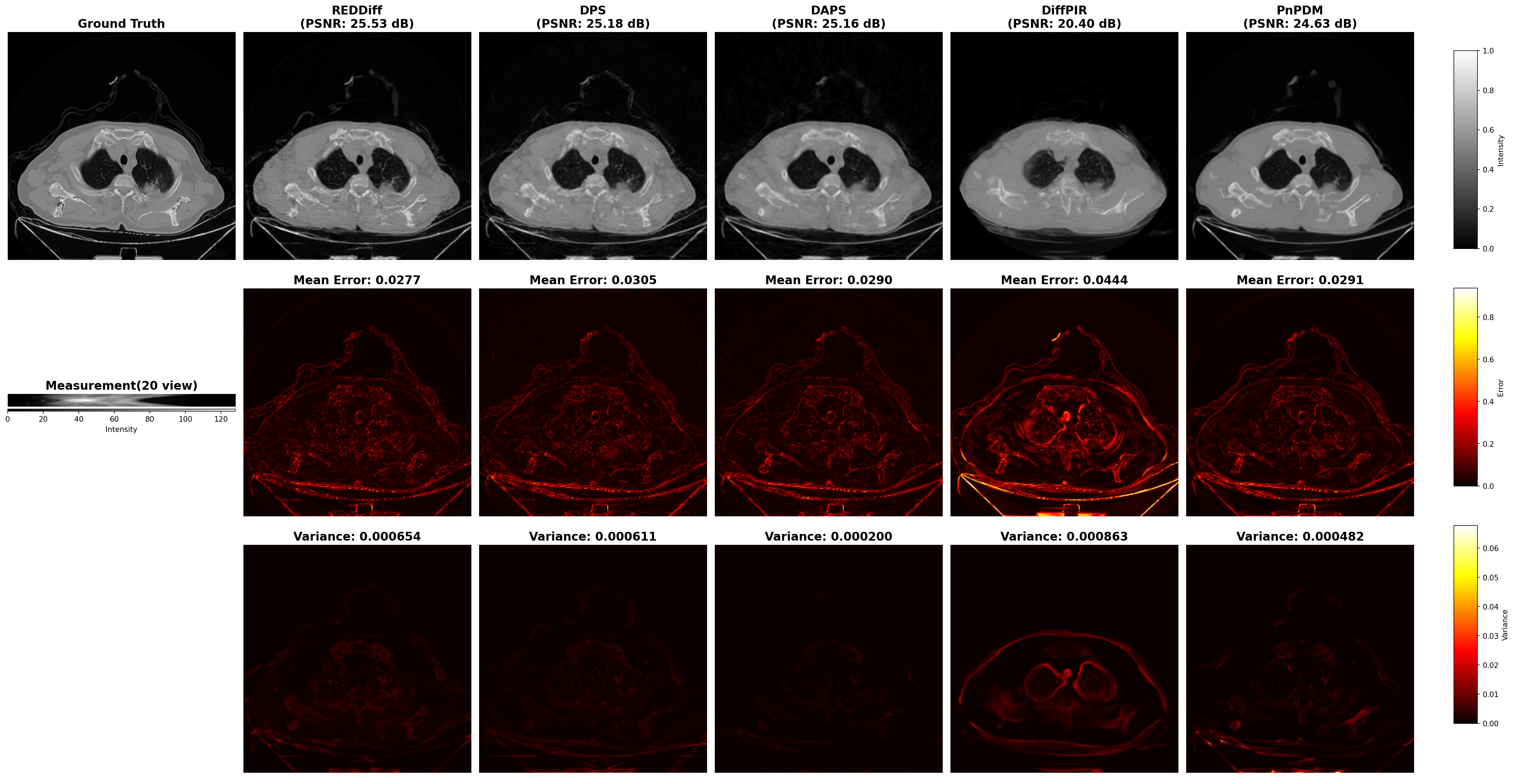}
    \caption{{20 view cancer CT reconstruction (OOD)}}
    \label{fig:ctcancer20}
\end{figure}

\begin{figure}[h]
    \centering
    \includegraphics[width=0.9\linewidth]{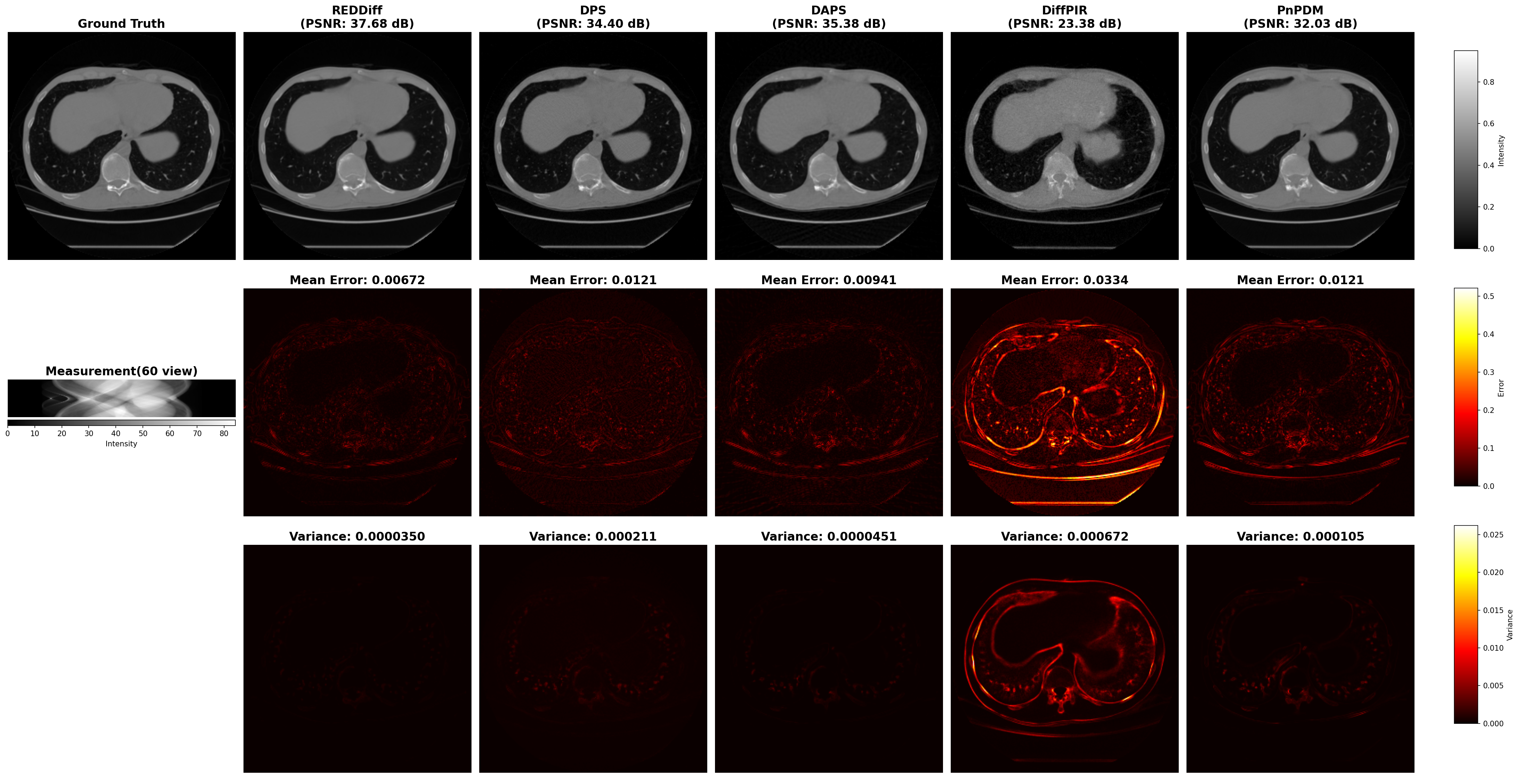}
    \caption{{60 view CT reconstruction (InD)}}
    \label{fig:ct60}
\end{figure}

\begin{table*}[t]
\centering
\resizebox{\textwidth}{!}{
\begin{tabular}{cl|cc|cc|cc|cc|cc}
\hline
\textbf{Setting} & \textbf{Method}
& \multicolumn{2}{c|}{\textbf{Test Image 1}}
& \multicolumn{2}{c|}{\textbf{Test Image 2}}
& \multicolumn{2}{c|}{\textbf{Test Image 3}}
& \multicolumn{2}{c|}{\textbf{Test Image 4}}
& \multicolumn{2}{c}{\textbf{Test Image 5}} \\
\cline{3-12}
& & PSNR(std) & KSD & PSNR(std) & KSD & PSNR(std) & KSD & PSNR(std) & KSD & PSNR(std) & KSD \\
\hline

\multirow{5}{*}{$\sigma=0.01$}
& DAPS    & 28.24(0.06)& 1075.13 & 27.64(0.08) & 998.91  & 27.53(0.09) & 1033.83 & 27.51(0.08) & 968.10  & 27.2(0.07) & 960.91\\
& DPS     & 31.54(0.2)& 221240.34 & 31.14(0.22) & 214006.06 & 30.65(0.21) & 214636.31 & 30.64(0.23) & 213411.11 & 30.47(0.2) & 213461.80\\
& DiffPIR & 22.02(0.17) & 290340.77 & 21.92(0.19) & 219286.31 & 22.17(0.21) & 229036.86 & 21.78(0.21) & 215080.82 & 21.25(0.21) & 205437.87\\
& PnPDM   & 27.4(0.48)& 40381.45  & 27.11(0.54) & 38866.05  & 27.09(0.52) & 39297.84  & 27.11(0.64) & 37512.72  & 27.07(0.54) & 35247.54 \\
& REDDiff & 31.29(0.09)& 8348.56   & 30.9(0.11) & 8223.45   & 30.57(0.1) & 8265.44   &30.39(0.1)& 7958.68   & 30.74(0.12) & 7967.16 \\
\hline
\multirow{5}{*}{$\sigma=0.1$}
&DAPS    & 28.23(0.06)& 11.01 & 27.66(0.07) & 10.19 & 27.52(0.08) & 10.57 & 27.5(0.08) & 9.94  & 27.2(0.07) & 9.80   \\
&DPS     & 31.52(0.19)& 2211.65 & 31.09(0.22) & 2138.74 & 30.61(0.2) & 2136.88 & 30.62(0.23) & 2133.68 &30.36(0.21)& 2135.96  \\
&DiffPIR & 22.02(0.17)& 2903.88 & 21.92(0.19) & 2193.25 & 22.17(0.21) & 2290.67 & 21.78(0.21)& 2149.57 & 21.07(0.22) & 2119.35  \\
&PnPDM   & 27.4(0.48)& 404.25  & 27.09(0.53) & 389.15  & 27.09(0.52) & 393.21  & 27.11(0.64) & 375.60  & 26.97(0.55) & 357.95  \\
&REDDiff & 31.31(0.09)& 83.97   & 30.87(0.11) & 82.74   & 30.56(0.09) & 83.16   & 30.38(0.1) & 80.23   & 30.58(0.1) & 79.35    \\
\hline
\multirow{5}{*}{$\sigma\approx0.26$}
& DAPS    & 28.21(0.06)& 2.36 & 27.66(0.07) & 2.06 &27.49(0.08)& 2.03 & 27.47(0.08) & 2.04  & 27.18(0.07) & 2.07 \\
& DPS     &31.42(0.19) & 337.91 & 30.91(0.22) & 263.07 &30.55(0.2)& 245.41 & 30.49(0.23)& 258.79 & 30.43(0.2) & 275.05  \\
& DiffPIR & 22.02(0.17)& 440.86 & 21.93(0.19) & 270.35 & 22.17(0.21) & 263.98 & 21.78(0.21)&261.56 & 21.07(0.22) & 272.78  \\
& PnPDM   & 27.4(0.48)& 61.57  &27.05(0.53)& 48.18  & 27.08(0.52) & 45.50  & 27.13(0.63) & 45.78  & 26.97(0.54) & 46.31   \\
& REDDiff & 31.31(0.1)& 13.26  & 30.78(0.11) & 10.74  & 30.52(0.09) & 10.10  &30.42(0.1) & 10.34  &30.66(0.1)& 10.78   \\

\hline
\end{tabular}
}
\caption{PSNR and KSD scores across five test images on 20-view CT reconstruction (InD) under different degradation settings.}
\label{tab:combined_ksd_ct}
\end{table*}

\begin{table*}[t]
\centering
\resizebox{\textwidth}{!}{
\begin{tabular}{cl|cc|cc|cc|cc|cc}
\hline
\textbf{Setting} & \textbf{Method}
& \multicolumn{2}{c|}{\textbf{Test Image 1}}
& \multicolumn{2}{c|}{\textbf{Test Image  2}}
& \multicolumn{2}{c|}{\textbf{Test Image  3}}
& \multicolumn{2}{c|}{\textbf{Test Image  4}}
& \multicolumn{2}{c}{\textbf{Test Image  5}} \\
\cline{3-12}
& & PSNR(std)& KSD & PSNR(std) & KSD & PSNR(std) & KSD & PSNR(std) & KSD & PSNR(std) & KSD \\
\hline

\multirow{5}{*}{CT cancer, $\sigma=0.01$}
& DAPS   & 25.04(0.06) & 1740.52 & 25.22(0.06) & 1743.21 & 24.9(0.06) & 1734.35 & 27.64(3.37) & 1206.38 & 25.06(0.06) & 1730.23  \\
& DPS     & 24.8(0.2)& 210922.06 & 24.92(0.16)& 209845.06 & 24.97(0.21) & 210047.86 & 28.46(0.4) & 213748.52 & 25.25(0.22) & 210133.82   \\
& DiffPIR &20.2(0.09) & 340760.73 & 20.21(0.1) & 336048.86 & 19.75(0.11) & 341703.17 &21.54(0.13)& 342059.94 & 20.16(0.1) & 348602.84  \\
& PnPDM   & 24.36(0.14)& 38180.62  & 24.46(0.23) & 38745.92  &24.38(0.24)& 40020.66  &26.95(0.27)& 40112.75  &24.43(0.18)& 41974.18   \\
& REDDiff & 25.36(0.07)& 11093.03  & 25.53(0.08) & 11063.26  & 25.63(0.1) & 11199.65  & 28.01(0.09) & 10325.06  & 25.64(0.06) & 11595.82  \\

\hline
\multirow{5}{*}{CT cancer, $\sigma\approx0.1$}
& DAPS    & 25.03(0.06) & 17.44   & 25.21(0.06) & 17.48   & 24.90(0.06) & 17.40   & 27.63(3.37) & 12.67   & 25.06(0.06) & 17.34 \\
& DPS     & 24.79(0.21) & 2108.72 & 24.91(0.16) & 2097.73 & 24.92(0.21) & 2100.35 & 28.46(0.39) & 2126.26 & 25.25(0.22) & 2097.72 \\
& DiffPIR & 20.20(0.09) & 3407.80 & 20.21(0.10) & 3360.23 & 19.75(0.11) & 3417.05 & 21.54(0.13) & 3420.62 & 20.16(0.10) & 3485.87 \\
& PnPDM   & 24.36(0.14) & 382.65  & 24.45(0.23) & 387.95  & 24.38(0.23) & 401.35  & 26.95(0.27) & 401.75  & 24.43(0.18) & 420.58 \\
& REDDiff & 25.36(0.07) & 111.31  & 25.53(0.08) & 111.12  & 25.63(0.10) & 112.35  & 28.01(0.09) & 103.83  & 25.64(0.06) & 116.41 \\

\hline
\multirow{5}{*}{CT cancer, $\sigma\approx0.26$}
& DAPS    & 25.01(0.06)& 2.69 & 25.19(0.06) & 2.73 & 24.88(0.06) & 2.72 & 27.6(3.37) & 3.96 & 25.04(0.06) & 2.73   \\
& DPS     & 24.77(0.2)& 269.03 &24.88(0.16) & 270.46 & 24.75(0.2)& 269.96 & 28.41(0.4) & 275.79 & 25.21(0.22) & 271.99  \\
& DiffPIR & 20.2(0.09)& 433.71 & 20.21(0.1) & 433.67 & 19.75(0.11) & 439.82 & 21.54(0.13) & 440.40 &20.17(0.1) & 452.78   \\
& PnPDM   & 24.35(0.14)& 48.82  & 24.43(0.23)& 50.30  & 24.37(0.23) & 51.93  & 26.94(0.27) & 51.98  & 24.42(0.19) & 54.83    \\
& REDDiff & 25.33(0.07)& 14.50  & 25.51(0.08) & 14.71  & 25.6(0.11) & 14.80  & 27.98(0.08) & 13.80  & 25.63(0.06) & 15.46    \\

\hline
\end{tabular}
}
\caption{PSNR and KSD scores across five test images on 20-view cancer CT reconstruction (OOD) under different degradation settings.}
\label{tab:combined_ksd_ct_cancer}
\end{table*}

\FloatBarrier
\subsection{Hyperparameter Details}
\label{supp:hyper_real}

For tasks (i) linear inverse scattering (180 and 360 views respectively) and (ii) MRI (simulated), we adopt the solver hyperparameters from Table~12 of InverseBench~\citep{zheng2025inversebench}  except for the noise level in DAPS for MRI task, where we fail to make reasonable reconstruction, and made a sweeping based on best accuracy, then changed the noise level into 0.008. For the CT task, hyperparameters are tuned separately following standard validation procedures as stated in InverseBench~\citep{zheng2025inversebench} Section B.7.2. All hyperparameters are reported in Table~\ref{tab:hyperparameter_setting}.

\begin{table*}[h]
\centering
\resizebox{\textwidth}{!}{
\begin{tabular}{llccc}
\toprule
\textbf{Methods/Parameters} 
& \textbf{Search space}
& \textbf{Linear inverse scattering (360 / 180)}
& \textbf{MRI (Sim.)}
& \textbf{CT (20 view / 60 view)}\\
\midrule

\textbf{DPS} & & & & \\
Guidance scale & $[10^{-3}, 10^3]$ & $280 / 380$ & $0.589$ & $10^{-1}$ \\
\midrule

\textbf{REDDiff} & & & & \\
Learning rate & $[10^{-4}, 1.0]$ & $0.04$ & $4 \times 10^{-2}$ & $0.2$ \\
Regularization $\lambda_{\mathrm{base}}$ & $[10^{-3}, 1.0]$ & $0.0005$ & $2.33 \times 10^{-1}$ & $20$ \\
Regularization schedule & constant, linear, sqrt & constant & sqrt & constant \\
Gradient weight & $[10^{-2}, 10^2]$ & $1500$ & $6.68 \times 10^1$ & $0.5$  \\
\midrule

\textbf{DiffPIR} & & & & \\
\# sampling steps & $\{200,400,\ldots,1000\}$ & $200$ & $1000$ & $50$  \\
Regularization $\lambda$ & $[1,10^5]$ & $4 \times 10^{-4}$ & $163$ & $1000 / 2000$  \\
Stochasticity $\zeta$ & $[10^{-5},1]$ & $1$ & $0.114$ & $0.5 / 0.4$  \\
\midrule

\textbf{PnPDM} & & & & \\
Annealing step & $[50,200]$ & $100$ & $100$ & $100$  \\
Annealing sigma max & $[10,50]$ & $10$ & $10$ & $50$  \\
Annealing decay rate & $[0.60,0.99]$ & $0.9$ & $0.93$ & $0.93$  \\
Langevin step size & $[10^{-6},10^{-3}]$ & $2 \times 10^{-5}$ & $10^{-6}$ & $1 \times 10^{-4}$  \\
Langevin step number & $[10,500]$ & $200$ & $200$ & $100$ \\
\midrule

\textbf{DAPS} & & & &  \\
Annealing step & $[50,200]$ & $200$ & $200$ & $100$  \\
Diffusion step & $[1,10]$ & $10$ & $5$ & $5$  \\
Langevin step size & $[10^{-6},10^{-3}]$ & $4 \times 10^{-5}$ & $1.03 \times 10^{-5}$ & $1.8 \times 10^{-7}$  \\
Langevin step number & $[10,500]$ & $50$ & $100$ & $100$  \\
Noise level & $[10^{-4},10]$ & $10^{-4}$ & $0.008$ & $0.015$  \\
\midrule
\textbf{DDRM} & & & & \\
Stochasticity $\eta$ & $[0,1]$ & $0.85$ & -- & -- \\
\midrule
\textbf{DDNM} & & & & \\
Stochasticity $\eta$ & $[0,1]$ & $0.95$ & -- & -- \\
\# time-travel steps $L$ & $[0,5]$ & $1$ & -- & -- \\

\midrule
\textbf{IIGDM} & & & & \\
Stochasticity $\eta$ & $[0,1]$ & $0.2$ & -- & -- \\

\midrule
\textbf{FPS} & & & & \\
Stochasticity $\eta$ & $[0,1]$ & $0.9$ & -- & -- \\
\# particles & $[1,20]$ & $20$ & -- & -- \\

\midrule
\textbf{MCG-diff} & & & & \\
\# particles & $[1,64]$ & $16$ & -- & -- \\
\bottomrule
\end{tabular}
}
\caption{Hyperparameter settings used for each inverse problem.}
\label{tab:hyperparameter_setting}
\end{table*}

\begin{table}[h]
\centering
\begin{tabular}{ccccc}
\hline
\textbf{Guidance scale} & \textbf{PSNR} & \textbf{KSD} \\
\hline
0.2 & 30.81(0.42) & 458615.46 \\
0.4 & 27.86(1.90) & 1063583.36 \\
0.6 & 26.17(3.22) & 1584447.02 \\
0.8 & 24.11(3.85) & 2137034.27 \\
1.0 & 21.05(3.94) & 3227196.72 \\
\hline
\end{tabular}
\caption{Hyperparameter sensitivity evaluation on DPS solving 20-view CT reconstruction degraded with $\sigma=0.01$}
\label{tab:dps-hyper}
\end{table}
\subsection{Computation Resources}
\paragraph{Diffusion Prior Training.}
Training of the CT diffusion prior model was performed using a single NVIDIA A100 GPU for approximately two days. 

\paragraph{Inference and Sampling.}
Inference and sampling experiments were conducted using a combination of NVIDIA L40S GPUs and A100 GPUs. Each inference takes 1-5 minutes depends on different methods.

\paragraph{KSD Experiments.}
Kernel Stein Discrepancy (KSD) evaluation used L40S GPUs.

%% file: supp-proof.tex
\FloatBarrier
\newpage
\section{Proof}
\begin{proof}[Proof of Proposition ~ \ref{prop:stein identity}]
By definition,
\[
\begin{aligned}
    \mathbb E_{X\sim p(x\mid y_0)}
    \left[
        \mathcal T_p f(X)
    \right]
    &=
    \int
    \left[
        s_p(x)^\top f(x)
        +
        \nabla_x\cdot f(x)
    \right]
    p(x\mid y_0)\,dx.
\end{aligned}
\]
Since
\[
    s_p(x)
    =
    \nabla_x \log p(x\mid y_0)
    =
    \frac{\nabla_x p(x\mid y_0)}{p(x\mid y_0)},
\]
we have
\[
    s_p(x)p(x\mid y_0)
    =
    \nabla_x p(x\mid y_0).
\]
Therefore,
\[
\begin{aligned}
    \mathbb E_{p}
    \left[
        \mathcal T_p f(X)
    \right]
    &=
    \int
    f(x)^\top \nabla_x p(x\mid y_0)\,dx
    +
    \int
    p(x\mid y_0)\nabla_x\cdot f(x)\,dx \\
    &=
    \int
    \nabla_x\cdot
    \left[
        p(x\mid y_0) f(x)
    \right] dx.
\end{aligned}
\]
By the divergence theorem and the boundary condition,
\[
    \int
    \nabla_x\cdot
    \left[
        p(x\mid y_0) f(x)
    \right] dx
    =
    0.
\]
Hence,
\[
    \mathbb E_{X\sim p(x\mid y_0)}
    \left[
        \mathcal T_p f(X)
    \right]
    =
    0.
\]
\end{proof}

\begin{proof}[Proof of Proposition ~\ref{prop:valid measure}]
We first show non-negativity. By definition,
\[
    \mathrm{KSD}(q,p)
    =
    \sup_{\|f\|_{\mathcal H^d}\le 1}
    \mathbb E_{X\sim q(x\mid y_0)}
    \left[
        \mathcal T_p f(X)
    \right].
\]
Since the zero function \(f\equiv 0\) belongs to \(\mathcal H^d\), we have
\[
    \mathbb E_{X\sim q(x\mid y_0)}
    \left[
        \mathcal T_p f(X)
    \right]
    =
    0,
\]
which implies
\[
    \mathrm{KSD}(q,p)\ge 0.
\]

Next, suppose \(q(x\mid y_0)=p(x\mid y_0)\). By the Stein identity,
\[
    \mathbb E_{X\sim p(x\mid y_0)}
    \left[
        \mathcal T_p f(X)
    \right]
    =
    0
\]
for all admissible test functions \(f\). Therefore,
\[
    \mathrm{KSD}(q,p)
    =
    \sup_{\|f\|_{\mathcal H^d}\le 1}
    0
    =
    0.
\]

Conversely, suppose \(\mathrm{KSD}(q,p)=0\). Then
\[
    \mathbb E_{X\sim q(x\mid y_0)}
    \left[
        \mathcal T_p f(X)
    \right]
    =
    0
    \qquad
    \forall f\in\mathcal H^d.
\]
Under standard regularity conditions on \(p(x\mid y_0)\) and for a
characteristic kernel \(k\) (e.g., the inverse multiquadric kernel), the only
distribution satisfying these Stein identities is \(p(x\mid y_0)\). Hence,
\[
    q(x\mid y_0)=p(x\mid y_0).
\]

\end{proof}

\begin{proof}[Proof of Proposition ~\ref{prop:closed form}]
Since \(\hat q_N\) is empirical,
\[
    \mathbb E_{X\sim \hat q_N}
    \left[
        \mathcal T_p f(X)
    \right]
    =
    \frac{1}{N}
    \sum_{i=1}^N
    \mathcal T_p f(x_i).
\]
Therefore,
\[
    \mathrm{KSD}(\hat q_N,p)
    =
    \sup_{\|f\|_{\mathcal H^d}\le 1}
    \frac{1}{N}
    \sum_{i=1}^N
    \left[
        s_p(x_i)^\top f(x_i)
        +
        \nabla_x\cdot f(x_i)
    \right].
\]

Using the reproducing property of the RKHS, the functional
\[
    f
    \mapsto
    \frac{1}{N}
    \sum_{i=1}^N
    \mathcal T_p f(x_i)
\]
can be written as an inner product in \(\mathcal H^d\):
\[
    \frac{1}{N}
    \sum_{i=1}^N
    \mathcal T_p f(x_i)
    =
    \left\langle
        f,
        \frac{1}{N}
        \sum_{i=1}^N
        \xi_p(x_i,\cdot)
    \right\rangle_{\mathcal H^d},
\]
where
\[
    \xi_p(x_i,\cdot)
    =
    s_p(x_i) k(x_i,\cdot)
    +
    \nabla_{x_i} k(x_i,\cdot).
\]

Hence, by Cauchy--Schwarz,
\[
\begin{aligned}
    \mathrm{KSD}(\hat q_N,p)
    &=
    \sup_{\|f\|_{\mathcal H^d}\le 1}
    \left\langle
        f,
        \frac{1}{N}
        \sum_{i=1}^N
        \xi_p(x_i,\cdot)
    \right\rangle_{\mathcal H^d} \\
    &=
    \left\|
        \frac{1}{N}
        \sum_{i=1}^N
        \xi_p(x_i,\cdot)
    \right\|_{\mathcal H^d}.
\end{aligned}
\]

Squaring both sides gives
\[
\begin{aligned}
    \mathrm{KSD}^2(\hat q_N,p)
    &=
    \left\langle
        \frac{1}{N}
        \sum_{i=1}^N
        \xi_p(x_i,\cdot),
        \frac{1}{N}
        \sum_{j=1}^N
        \xi_p(x_j,\cdot)
    \right\rangle_{\mathcal H^d} \\
    &=
    \frac{1}{N^2}
    \sum_{i=1}^N
    \sum_{j=1}^N
    \left\langle
        \xi_p(x_i,\cdot),
        \xi_p(x_j,\cdot)
    \right\rangle_{\mathcal H^d}.
\end{aligned}
\]

Define
\[
    u_p(x_i,x_j)
    :=
    \left\langle
        \xi_p(x_i,\cdot),
        \xi_p(x_j,\cdot)
    \right\rangle_{\mathcal H^d}.
\]
Expanding this RKHS inner product yields
\[
\begin{aligned}
    u_p(x_i,x_j)
    &=
    s_p(x_i)^\top k(x_i,x_j)s_p(x_j)
    +
    s_p(x_i)^\top \nabla_{x_j} k(x_i,x_j)
    \\
    &\quad+
    s_p(x_j)^\top \nabla_{x_i} k(x_i,x_j)
    +
    \mathrm{tr}\!\left(
        \nabla_{x_i}\nabla_{x_j} k(x_i,x_j)
    \right).
\end{aligned}
\]
This proves the closed-form estimator for \(\mathrm{KSD}^2(\hat q_N,p)\).
\end{proof}

%% file: References.bib
@article{nagel2016unified,
  title={A unified framework for multilevel uncertainty quantification in Bayesian inverse problems},
  author={Nagel, Joseph B and Sudret, Bruno},
  journal={Probabilistic Engineering Mechanics},
  volume={43},
  pages={68--84},
  year={2016},
  publisher={Elsevier}
}

@inproceedings{
zheng2025inversebench,
title={InverseBench: Benchmarking Plug-and-Play Diffusion Priors for Inverse Problems in Physical Sciences},
author={Hongkai Zheng and Wenda Chu and Bingliang Zhang and Zihui Wu and Austin Wang and Berthy Feng and Caifeng Zou and Yu Sun and Nikola Borislavov Kovachki and Zachary E Ross and Katherine Bouman and Yisong Yue},
booktitle={The Thirteenth International Conference on Learning Representations},
year={2025},
howpublished={https://openreview.net/forum?id=U3PBITXNG6}
}

@article{chen2025solvingDBP,
  title={Solving inverse problems via diffusion-based priors: An approximation-free ensemble sampling approach},
  author={Chen, Haoxuan and Ren, Yinuo and Min, Martin Renqiang and Ying, Lexing and Izzo, Zachary},
  journal={arXiv preprint arXiv:2506.03979},
  year={2025}
}

@article{cardoso2023monteMCGdiff,
  title={Monte Carlo guided diffusion for Bayesian linear inverse problems},
  author={Cardoso, Gabriel and Idrissi, Yazid Janati El and Corff, Sylvain Le and Moulines, Eric},
  journal={arXiv preprint arXiv:2308.07983},
  year={2023}
}

@article{chung2022diffusionDPS,
  title={Diffusion posterior sampling for general noisy inverse problems},
  author={Chung, Hyungjin and Kim, Jeongsol and Mccann, Michael T and Klasky, Marc L and Ye, Jong Chul},
  journal={arXiv preprint arXiv:2209.14687},
  year={2022}
}

@article{chan2024estimatingEUAU_signlemodel,
  title={Estimating epistemic and aleatoric uncertainty with a single model},
  author={Chan, Matthew and Molina, Maria and Metzler, Chris},
  journal={Advances in Neural Information Processing Systems},
  volume={37},
  pages={109845--109870},
  year={2024}
}

@article{hofman2024quantifyingAandEUproperscore,
  title={Quantifying aleatoric and epistemic uncertainty with proper scoring rules},
  author={Hofman, Paul and Sale, Yusuf and H{\"u}llermeier, Eyke},
  journal={arXiv preprint arXiv:2404.12215},
  year={2024}
}

@inproceedings{wu2024PnPDM,
    title={Principled Probabilistic Imaging using Diffusion Models as Plug-and-Play Priors},
    author={Zihui Wu and Yu Sun and Yifan Chen and Bingliang Zhang and Yisong Yue and Katherine Bouman},
    booktitle={The Thirty-eighth Annual Conference on Neural Information Processing Systems},
    year={2024},
    howpublished={https://openreview.net/forum?id=Xq9HQf7VNV}
}

@article{mardani2023reddiff1,
  title={A Variational Perspective on Solving Inverse Problems with Diffusion Models},
  author={Mardani, Morteza and Song, Jiaming and Kautz, Jan and Vahdat, Arash},
  journal={arXiv preprint arXiv:2305.04391},
  year={2023}
}

@inproceedings{song2023pseudoinverseReddiff2,
  title={Pseudoinverse-guided diffusion models for inverse problems},
  author={Song, Jiaming and Vahdat, Arash and Mardani, Morteza and Kautz, Jan},
  booktitle={International Conference on Learning Representations},
  year={2023}
}

@inproceedings{kawar2022denoisingDDRM,
    title={Denoising Diffusion Restoration Models},
    author={Bahjat Kawar and Michael Elad and Stefano Ermon and Jiaming Song},
    booktitle={Advances in Neural Information Processing Systems},
    year={2022}
}

@article{wang2022zeroDDNM,
  title={Zero-Shot Image Restoration Using Denoising Diffusion Null-Space Model},
  author={Wang, Yinhuai and Yu, Jiwen and Zhang, Jian},
  journal={The Eleventh International Conference on Learning Representations},
  year={2023}
}

@misc{kim2025distributionshiftuncertaintyestimationinverse,
      title={Towards Distribution-Shift Uncertainty Estimation for Inverse Problems with Generative Priors}, 
      author={Namhoon Kim and Sara Fridovich-Keil},
      year={2025},
      eprint={2510.10947},
      archivePrefix={arXiv},
      primaryClass={cs.CV},
      howpublished={https://arxiv.org/abs/2510.10947}, 
}

@misc{zhang2024improvingdiffusioninverseproblemDAPS,
      title={Improving Diffusion Inverse Problem Solving with Decoupled Noise Annealing}, 
      author={Bingliang Zhang and Wenda Chu and Julius Berner and Chenlin Meng and Anima Anandkumar and Yang Song},
      year={2024},
      eprint={2407.01521},
      archivePrefix={arXiv},
      primaryClass={cs.LG},
      howpublished={https://arxiv.org/abs/2407.01521}, 
}

@inproceedings{zhu2023denoisingDiffPIR,
      title={Denoising Diffusion Models for Plug-and-Play Image Restoration},
      author={Yuanzhi Zhu and Kai Zhang and Jingyun Liang and Jiezhang Cao and Bihan Wen and Radu Timofte and Luc Van Gool},
      booktitle={IEEE Conference on Computer Vision and Pattern Recognition Workshops (NTIRE)},
      year={2023},
}

@inproceedings{
dou2024fpsdiffusion,
title={Diffusion Posterior Sampling for Linear Inverse Problem Solving: A Filtering Perspective},
author={Zehao Dou and Yang Song},
booktitle={The Twelfth International Conference on Learning Representations},
year={2024},
howpublished={https://openreview.net/forum?id=tplXNcHZs1}
}

@misc{song2024Resamplesolvinginverseproblemslatent,
      title={Solving Inverse Problems with Latent Diffusion Models via Hard Data Consistency}, 
      author={Bowen Song and Soo Min Kwon and Zecheng Zhang and Xinyu Hu and Qing Qu and Liyue Shen},
      year={2024},
      eprint={2307.08123},
      archivePrefix={arXiv},
      primaryClass={cs.CV},
      howpublished={https://arxiv.org/abs/2307.08123}, 
}

@misc{daras2024surveydiffusionmodelsinverse,
      title={A Survey on Diffusion Models for Inverse Problems}, 
      author={Giannis Daras and Hyungjin Chung and Chieh-Hsin Lai and Yuki Mitsufuji and Jong Chul Ye and Peyman Milanfar and Alexandros G. Dimakis and Mauricio Delbracio},
      year={2024},
      eprint={2410.00083},
      archivePrefix={arXiv},
      primaryClass={cs.LG},
      howpublished={https://arxiv.org/abs/2410.00083}, 
}

@misc{DDPMho2020denoisingdiffusionprobabilisticmodels,
      title={Denoising Diffusion Probabilistic Models}, 
      author={Jonathan Ho and Ajay Jain and Pieter Abbeel},
      year={2020},
      eprint={2006.11239},
      archivePrefix={arXiv},
      primaryClass={cs.LG},
      howpublished={https://arxiv.org/abs/2006.11239}, 
}

@misc{song2021scorebasedgenerativemodelingstochastic,
      title={Score-Based Generative Modeling through Stochastic Differential Equations}, 
      author={Yang Song and Jascha Sohl-Dickstein and Diederik P. Kingma and Abhishek Kumar and Stefano Ermon and Ben Poole},
      year={2021},
      eprint={2011.13456},
      archivePrefix={arXiv},
      primaryClass={cs.LG},
      howpublished={https://arxiv.org/abs/2011.13456}, 
}

@misc{ddimsong2022denoisingdiffusionimplicitmodels,
      title={Denoising Diffusion Implicit Models}, 
      author={Jiaming Song and Chenlin Meng and Stefano Ermon},
      year={2022},
      eprint={2010.02502},
      archivePrefix={arXiv},
      primaryClass={cs.LG},
      howpublished={https://arxiv.org/abs/2010.02502}, 
}

@inproceedings{UncertaintyDeepLearningNIPS2017_2650d608,
 author = {Kendall, Alex and Gal, Yarin},
 booktitle = {Advances in Neural Information Processing Systems},
 editor = {I. Guyon and U. Von Luxburg and S. Bengio and H. Wallach and R. Fergus and S. Vishwanathan and R. Garnett},
 pages = {},
 publisher = {Curran Associates, Inc.},
 title = {What Uncertainties Do We Need in Bayesian Deep Learning for Computer Vision?},
 url = {https://proceedings.neurips.cc/paper_files/paper/2017/file/2650d6089a6d640c5e85b2b88265dc2b-Paper.pdf},
 volume = {30},
 year = {2017}
}

@misc{song2022solvinginverseproblemsmedical,
      title={Solving Inverse Problems in Medical Imaging with Score-Based Generative Models}, 
      author={Yang Song and Liyue Shen and Lei Xing and Stefano Ermon},
      year={2022},
      eprint={2111.08005},
      archivePrefix={arXiv},
      primaryClass={eess.IV},
      howpublished={https://arxiv.org/abs/2111.08005}, 
}

@book{inverse_astronomy_osti_5734250,
  author       = {Craig, I and Brown, J},
  title        = {Inverse problems in astronomy},
  url          = {https://www.osti.gov/biblio/5734250},
  place        = {United States},
  publisher    = {Adam Hilger Ltd.,Accord, MA},
  year         = {1985},
  month        = {12}}

@book{ocean_Wunsch_1996, place={Cambridge}, title={The Ocean Circulation Inverse Problem}, publisher={Cambridge University Press}, author={Wunsch, Carl}, year={1996}}

@article{virieux_wave10.1190/1.3238367,
    author = {Virieux, J. and Operto, S.},
    title = {An overview of full-waveform inversion in exploration geophysics},
    journal = {Geophysics},
    volume = {74},
    number = {6},
    pages = {WCC1-WCC26},
    year = {2009},
    month = {12},
    issn = {0016-8033},
    doi = {10.1190/1.3238367},
    url = {https://doi.org/10.1190/1.3238367},
    eprint = {https://pubs.geoscienceworld.org/seg/geophysics/article-pdf/74/6/WCC1/3138505/gsgpy_74_6_WCC1.pdf},
}

@article{audio_signal_lemercier2025diffusion,
  title={Diffusion models for audio restoration: A review [special issue on model-based and data-driven audio signal processing]},
  author={Lemercier, Jean-Marie and Richter, Julius and Welker, Simon and Moliner, Eloi and V{\"a}lim{\"a}ki, Vesa and Gerkmann, Timo},
  journal={IEEE Signal Processing Magazine},
  volume={41},
  number={6},
  pages={72--84},
  year={2025},
  publisher={IEEE}
}

@INPROCEEDINGS{solve_audio_inverse_10095637,
  author={Moliner, Eloi and Lehtinen, Jaakko and Välimäki, Vesa},
  booktitle={ICASSP 2023 - 2023 IEEE International Conference on Acoustics, Speech and Signal Processing (ICASSP)}, 
  title={Solving Audio Inverse Problems with a Diffusion Model}, 
  year={2023},
  volume={},
  number={},
  pages={1-5},
  doi={10.1109/ICASSP49357.2023.10095637}}

@inproceedings{evalNEURIPS2021_6e289439,
 author = {Kadkhodaie, Zahra and Simoncelli, Eero},
 booktitle = {Advances in Neural Information Processing Systems},
 editor = {M. Ranzato and A. Beygelzimer and Y. Dauphin and P.S. Liang and J. Wortman Vaughan},
 pages = {13242--13254},
 publisher = {Curran Associates, Inc.},
 title = {Stochastic Solutions for Linear Inverse Problems using the Prior Implicit in a Denoiser},
 url = {https://proceedings.neurips.cc/paper_files/paper/2021/file/6e28943943dbed3c7f82fc05f269947a-Paper.pdf},
 volume = {34},
 year = {2021}
}

@article{ML_uncertainty_H_llermeier_2021,
   title={Aleatoric and epistemic uncertainty in machine learning: an introduction to concepts and methods},
   volume={110},
   ISSN={1573-0565},
   howpublished={http://dx.doi.org/10.1007/s10994-021-05946-3},
   DOI={10.1007/s10994-021-05946-3},
   number={3},
   journal={Machine Learning},
   publisher={Springer Science and Business Media LLC},
   author={Hüllermeier, Eyke and Waegeman, Willem},
   year={2021},
   month=mar, pages={457–506} }

@article{lidc,
  title={The lung image database consortium {(LIDC)} and image database resource initiative {(IDRI)}: A completed
reference database of lung nodules on {CT} scans.},
  author={Armato, Samuel G and McLennan, Geoffrey and Bidaut, Luc and McNitt-Gray, Michael F and Meyer, Charles R},
  journal={Medical Physics},
  volume={38},
  pages={915-931},
 doi = {10.1148/radiol.2323032035},
  year={2011}
}

@inproceedings{EDM,
author = {Karras, Tero and Aittala, Miika and Laine, Samuli and Aila, Timo},
title = {Elucidating the design space of diffusion-based generative models},
year = {2022},
isbn = {9781713871088},
publisher = {Curran Associates Inc.},
address = {Red Hook, NY, USA},
booktitle = {Proceedings of the 36th International Conference on Neural Information Processing Systems},
articleno = {1926},
numpages = {13},
location = {New Orleans, LA, USA},
series = {NIPS '22}
}

@misc{Lung-PET-CT-Dx,
  author = {Li, P. and Wang, S. and Li, T. and Lu, J. and HuangFu, Y. and Wang, D.},
  title  = {A Large-Scale CT and PET/CT Dataset for Lung Cancer Diagnosis (Lung-PET-CT-Dx)},
  year   = {2020},
  publisher = {The Cancer Imaging Archive},
  doi    = {10.7937/TCIA.2020.NNC2-0461},
  url    = {https://doi.org/10.7937/TCIA.2020.NNC2-0461}
}

@article{chung2022scoremri,
  title={Score-based diffusion models for accelerated MRI},
  author={Chung, Hyungjin and Ye, Jong Chul},
  journal={Medical Image Analysis},
  pages={102479},
  year={2022},
  publisher={Elsevier}
}

@misc{jalal2021robustcompressedsensingmri,
      title={Robust Compressed Sensing MRI with Deep Generative Priors}, 
      author={Ajil Jalal and Marius Arvinte and Giannis Daras and Eric Price and Alexandros G. Dimakis and Jonathan I. Tamir},
      year={2021},
      eprint={2108.01368},
      archivePrefix={arXiv},
      primaryClass={cs.LG},
      howpublished={https://arxiv.org/abs/2108.01368}, 
}

@misc{zbontar2019fastmriopendatasetbenchmarks,
      title={fastMRI: An Open Dataset and Benchmarks for Accelerated MRI}, 
      author={Jure Zbontar and Florian Knoll and Anuroop Sriram and Tullie Murrell and Zhengnan Huang and Matthew J. Muckley and Aaron Defazio and Ruben Stern and Patricia Johnson and Mary Bruno and Marc Parente and Krzysztof J. Geras and Joe Katsnelson and Hersh Chandarana and Zizhao Zhang and Michal Drozdzal and Adriana Romero and Michael Rabbat and Pascal Vincent and Nafissa Yakubova and James Pinkerton and Duo Wang and Erich Owens and C. Lawrence Zitnick and Michael P. Recht and Daniel K. Sodickson and Yvonne W. Lui},
      year={2019},
      eprint={1811.08839},
      archivePrefix={arXiv},
      primaryClass={cs.CV},
      howpublished={https://arxiv.org/abs/1811.08839}, 
}

@article{wiesner2019cytopacqlinearinverse,
  title={CytoPacq: a web-interface for simulating multi-dimensional cell imaging},
  author={Wiesner, David and Svoboda, David and Ma{\v{s}}ka, Martin and Kozubek, Michal},
  journal={Bioinformatics},
  volume={35},
  number={21},
  pages={4531--4533},
  year={2019},
  publisher={Oxford University Press}
}

@article{BayesianMRILuo_2023,
   title={Bayesian MRI reconstruction with joint uncertainty estimation using diffusion models},
   volume={90},
   ISSN={1522-2594},
   howpublished={http://dx.doi.org/10.1002/mrm.29624},
   DOI={10.1002/mrm.29624},
   number={1},
   journal={Magnetic Resonance in Medicine},
   publisher={Wiley},
   author={Luo, Guanxiong and Blumenthal, Moritz and Heide, Martin and Uecker, Martin},
   year={2023},
   month=Mar, pages={295–311} }

@misc{coeurdoux2023plugandplaysplitgibbssampler,
      title={Plug-and-Play split Gibbs sampler: embedding deep generative priors in Bayesian inference}, 
      author={Florentin Coeurdoux and Nicolas Dobigeon and Pierre Chainais},
      year={2023},
      eprint={2304.11134},
      archivePrefix={arXiv},
      primaryClass={stat.ML},
      howpublished={https://arxiv.org/abs/2304.11134}, 
}

@misc{zach2025statisticalbenchmarkdiffusionposterior,
      title={A Statistical Benchmark for Diffusion Posterior Sampling Algorithms}, 
      author={Martin Zach and Youssef Haouchat and Michael Unser},
      year={2025},
      eprint={2509.12821},
      archivePrefix={arXiv},
      primaryClass={eess.SP},
      howpublished={https://arxiv.org/abs/2509.12821}, 
}

@book{kaipio2005statistical,
  title={Statistical and computational inverse problems},
  author={Kaipio, Jari P and Somersalo, Erkki},
  year={2005},
  publisher={Springer}
}

@article{stuart2010inverse,
  title={Inverse problems: a Bayesian perspective},
  author={Stuart, Andrew M},
  journal={Acta numerica},
  volume={19},
  pages={451--559},
  year={2010},
  publisher={Cambridge University Press}
}

@misc{KSDliu2016kernelizedsteindiscrepancygoodnessoffit,
      title={A Kernelized Stein Discrepancy for Goodness-of-fit Tests and Model Evaluation}, 
      author={Qiang Liu and Jason D. Lee and Michael I. Jordan},
      year={2016},
      eprint={1602.03253},
      archivePrefix={arXiv},
      primaryClass={stat.ML},
      howpublished={https://arxiv.org/abs/1602.03253}, 
}

@misc{gong2021slicedkernelizedsteindiscrepancy,
      title={Sliced Kernelized Stein Discrepancy}, 
      author={Wenbo Gong and Yingzhen Li and José Miguel Hernández-Lobato},
      year={2021},
      eprint={2006.16531},
      archivePrefix={arXiv},
      primaryClass={cs.LG},
      howpublished={https://arxiv.org/abs/2006.16531}, 
}

@misc{gorham2019measuringsamplequalitysteins,
      title={Measuring Sample Quality with Stein's Method}, 
      author={Jackson Gorham and Lester Mackey},
      year={2019},
      eprint={1506.03039},
      archivePrefix={arXiv},
      primaryClass={stat.ML},
      howpublished={https://arxiv.org/abs/1506.03039}, 
}

@misc{UQMRIedupuganti2020uncertaintyquantificationdeepmri,
      title={Uncertainty Quantification in Deep MRI Reconstruction}, 
      author={Vineet Edupuganti and Morteza Mardani and Shreyas Vasanawala and John Pauly},
      year={2020},
      eprint={1901.11228},
      archivePrefix={arXiv},
      primaryClass={cs.CV},
      howpublished={https://arxiv.org/abs/1901.11228}, 
}

@article{kumar2024multiinverseDesign,
  title={Multi-solution inverse design in photonics using generative modeling},
  author={Kumar, Preetam and Patra, Aniket and Shivaleela, ES and Caligiuri, Vincenzo and Krahne, Roman and De Luca, Antonio and Srinivas, T},
  journal={Journal of the Optical Society of America B},
  volume={41},
  number={2},
  pages={A152--A160},
  year={2024},
  publisher={Optica Publishing Group}
}

@inproceedings{gorham2017measuring,
  title={Measuring sample quality with kernels},
  author={Gorham, Jackson and Mackey, Lester},
  booktitle={International Conference on Machine Learning},
  pages={1292--1301},
  year={2017},
  organization={PMLR}
}

@article{FIDheusel2017gans,
  title={Gans trained by a two time-scale update rule converge to a local nash equilibrium},
  author={Heusel, Martin and Ramsauer, Hubert and Unterthiner, Thomas and Nessler, Bernhard and Hochreiter, Sepp},
  journal={Advances in neural information processing systems},
  volume={30},
  year={2017}
}

@inproceedings{LPIPSzhang2018unreasonable,
  title={The unreasonable effectiveness of deep features as a perceptual metric},
  author={Zhang, Richard and Isola, Phillip and Efros, Alexei A and Shechtman, Eli and Wang, Oliver},
  booktitle={Proceedings of the IEEE conference on computer vision and pattern recognition},
  pages={586--595},
  year={2018}
}
